\newtheorem{theorem}{Theorem}
\newtheorem{definition}[theorem]{Definition}
\title{Learning Graphons via Structured Gromov-Wasserstein Barycenters}
\author{
    Hongteng Xu\textsuperscript{\rm 1, 2}, 
    Dixin Luo\textsuperscript{\rm 3}\thanks{Correspondence author}, 
    Lawrence Carin\textsuperscript{\rm 4}, 
    Hongyuan Zha\textsuperscript{\rm 5}\\
}
\begin{document}
\maketitle

\begin{abstract}
We propose a novel and principled method to learn a nonparametric graph model called \textit{graphon}, which is defined in an infinite-dimensional space and represents arbitrary-size graphs. 
Based on the weak regularity lemma from the theory of graphons, we leverage a step function to approximate a graphon. 
We show that the cut distance of graphons can be relaxed to the Gromov-Wasserstein distance of their step functions. 
Accordingly, given a set of graphs generated by an underlying graphon, we learn the corresponding step function as the Gromov-Wasserstein barycenter of the given graphs.
Furthermore, we develop several enhancements and extensions of the basic algorithm, $e.g.$, the smoothed Gromov-Wasserstein barycenter for guaranteeing the continuity of the learned graphons and the mixed Gromov-Wasserstein barycenters for learning multiple structured graphons. 
The proposed approach overcomes drawbacks of prior state-of-the-art methods, and outperforms them on both synthetic and real-world data. 
The code is available at \url{https://github.com/HongtengXu/SGWB-Graphon}.
\end{abstract}

\section{Introduction}
Given a set of graphs, $e.g.$, social networks and biological networks, we are often interested in modeling their generative mechanisms and building statistical graph models~\cite{kolaczyk2009statistical,goldenberg2010survey}. 
Many efforts have been made to achieve this aim, leading to such methods as the stochastic block model~\cite{nowicki2001estimation}, the graphlet~\cite{soufiani2012graphlet}, and the latent space model~\cite{hoff2002latent}. 
However, when dealing with large-scale complex networks, the parametric models above are often oversimplified, and thus, suffer from underfitting. 
To enhance the model capacity, a nonparametric graph model called \textit{graphon} (or \textit{graph limit}) was proposed~\cite{janson2008graph,lovasz2012large}. 
Mathematically, a graphon is a two-dimensional symmetric Lebesgue measurable function, denoted as $W: \Omega^2\mapsto [0, 1]$, where $\Omega$ is a measure space, $e.g.$, $\Omega=[0, 1]$. 
Given a graphon, we can generate arbitrarily sized graphs by the following sampling process:
\begin{eqnarray}\label{eq:generate_graph}
\begin{aligned}
&v_n&&\sim \text{Uniform}(\Omega),~\text{for}~n=1,...,N,\\
&a_{nn'}&&\sim \text{Bernoulli}(W(v_n, v_{n'})),~\text{for}~n,n'=1,...,N.
\end{aligned}
\end{eqnarray}
The first step samples $N$ nodes independently from a uniform distribution defined on $\Omega$. 
The second step generates an adjacency matrix $\bm{A}=[a_{nn'}]\in \{0, 1\}^{N\times N}$, whose elements yield the Bernoulli distributions determined by the graphon. 
Accordingly, we derive a graph $\mathcal{G}(\mathcal{V},\mathcal{E})$ with $\mathcal{V}=\{1,...,N\}$ and $\mathcal{E}=\{(n,n')~|~a_{nn'}=1\}$. 

This graphon model is useful theoretically to characterize complex graphs~\cite{chung2011spectra,lovasz2012large}, which has been widely used in many applications, $e.g.$, network centrality~\cite{ballester2006who,avella2018centrality}, control~\cite{jackson2015games,gao2019graphon}, and optimization~\cite{nagurney2013network,parise2018graphon}. 
A fundamental problem connected to these applications concerns \textit{how to robustly learn graphons from observed graphs.} 

Many learning methods have been developed to solve this problem. 
Most of them are based on the weak regularity lemma of graphon~\cite{frieze1999quick}. 
This lemma indicates that an arbitrary graphon can be approximated well by a two-dimensional step function. 
To learn step functions as target graphons, existing methods either leverage stochastic block models, $e.g.$, the sorting-and-smoothing (SAS) method~\cite{chan2014consistent}, the stochastic block approximation (SBA)~\cite{airoldi2013stochastic}, and its variant ``largest gap'' (LG)~\cite{channarond2012classification}, or they apply low-rank approximation directly to observed graphs, $e.g.$, the matrix completion (MC) method~\cite{keshavan2010matrix} and the universal singular value thresholding (USVT) algorithm~\cite{chatterjee2015matrix}. 

\begin{figure*}[t!]
    \centering
    \subfigure[$W(x, y)=xy$ and $0\leq x, y\leq 1$]{
    \includegraphics[height=1.9cm]{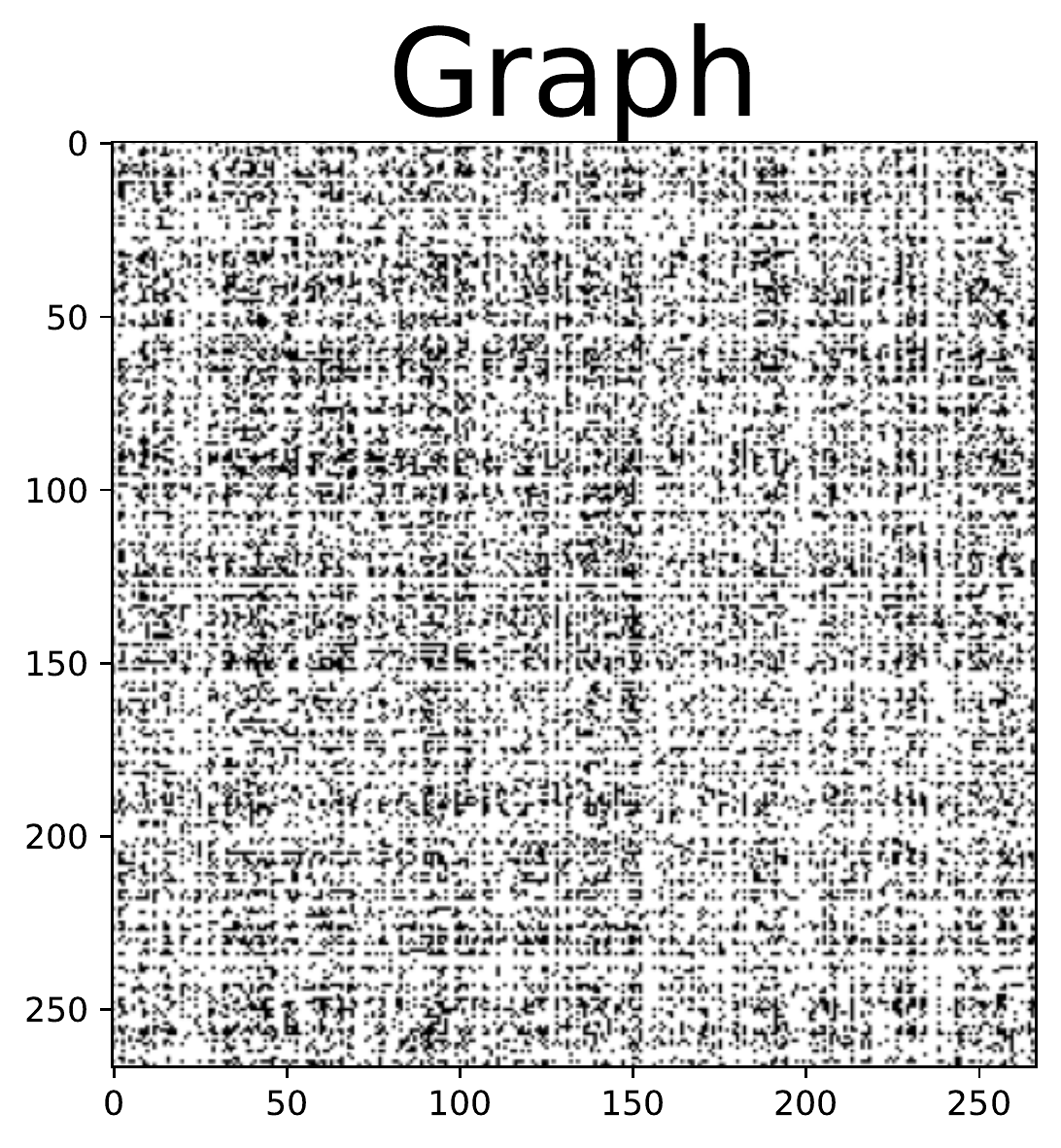}
    \includegraphics[height=1.9cm]{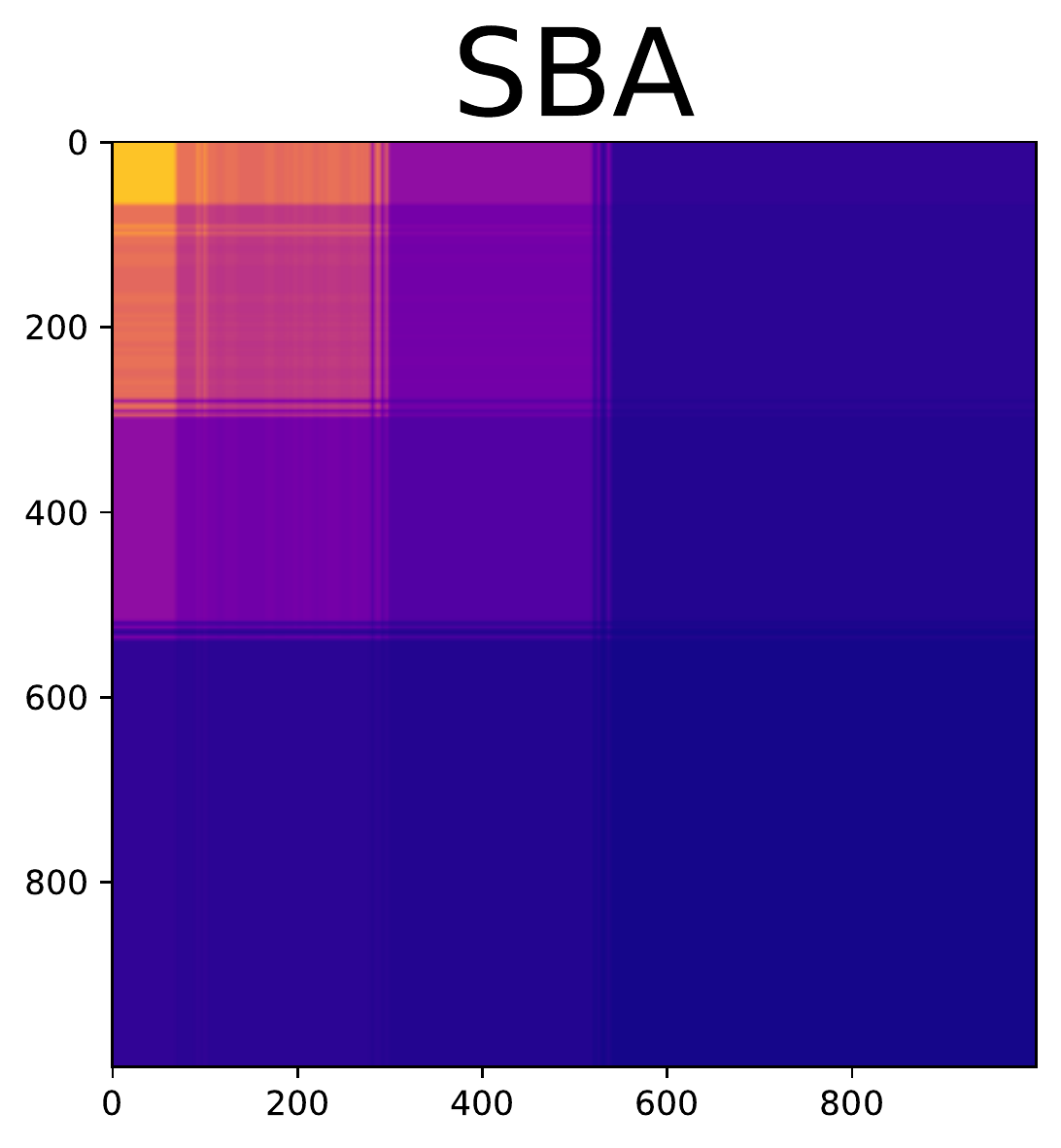}
    \includegraphics[height=1.9cm]{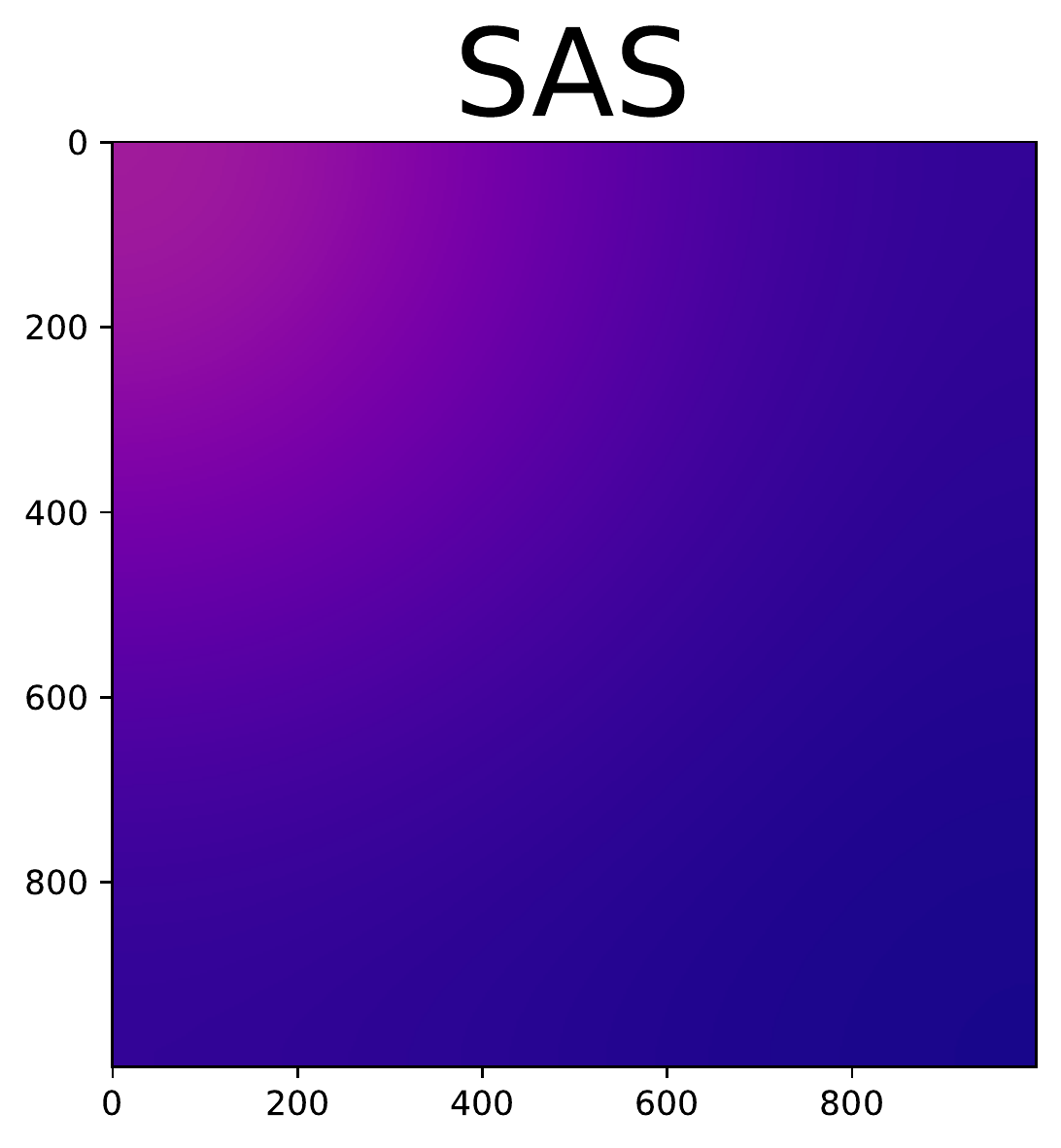}
    \includegraphics[height=1.9cm]{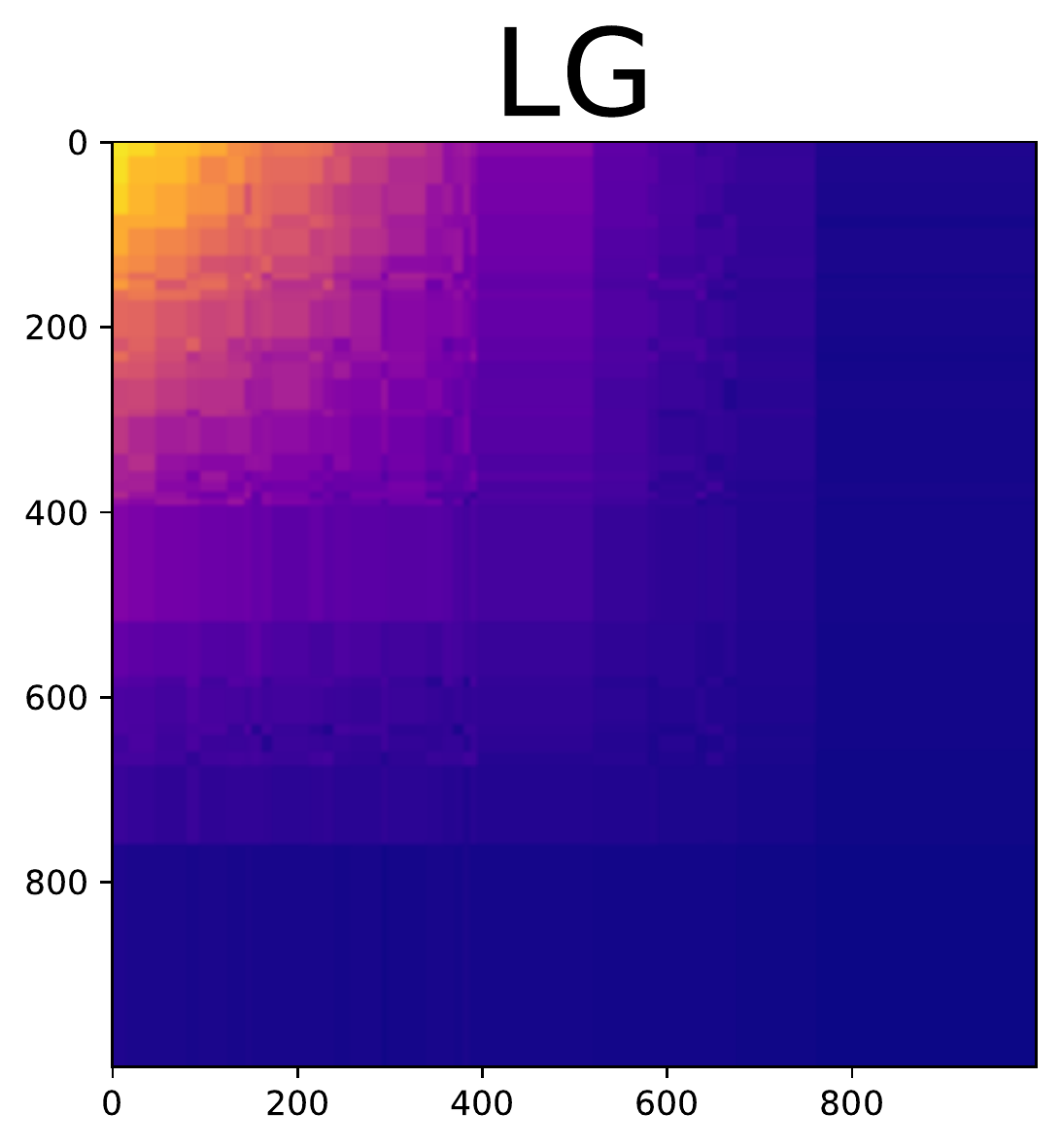}
    \includegraphics[height=1.9cm]{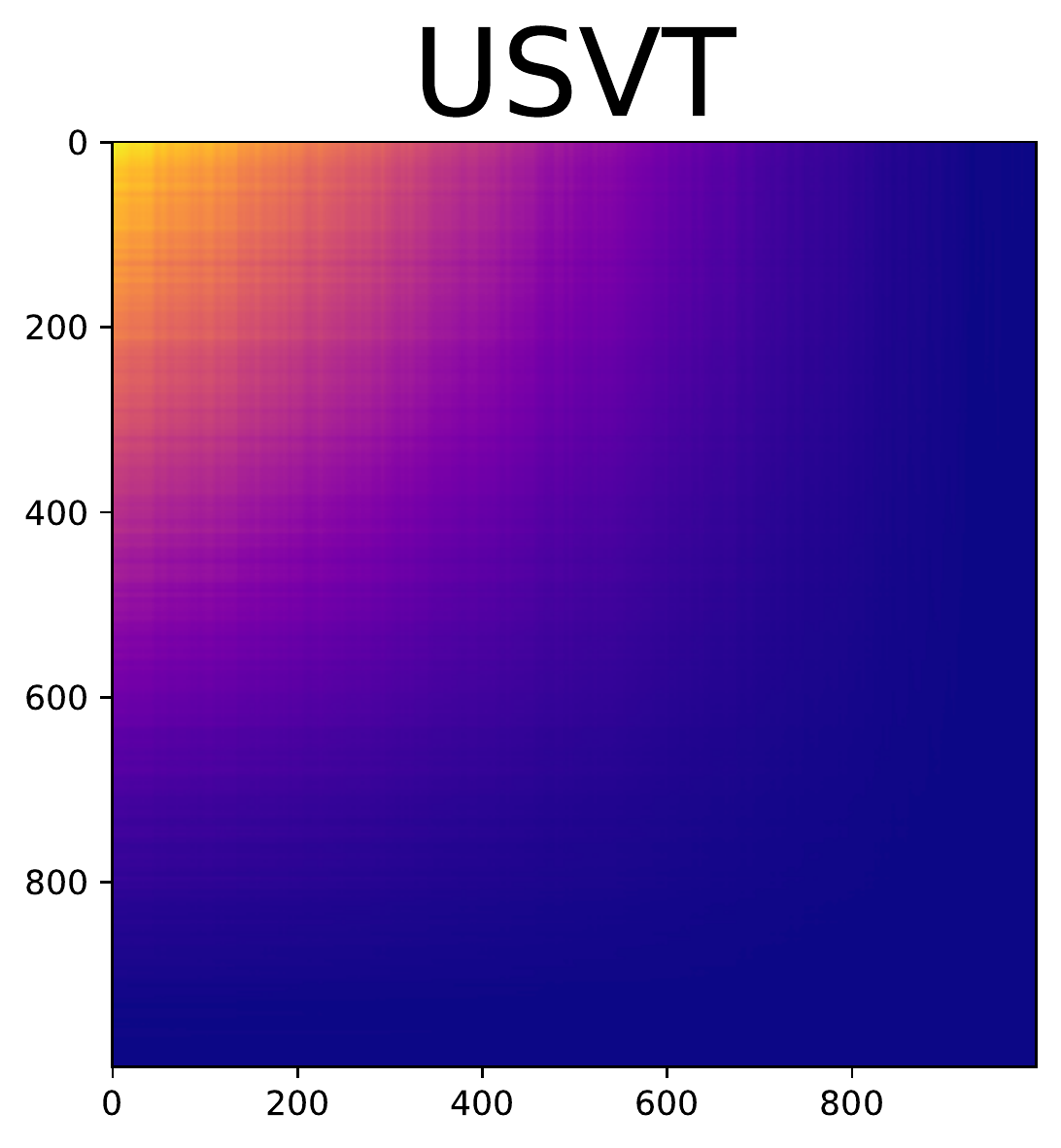}
    \includegraphics[height=1.9cm]{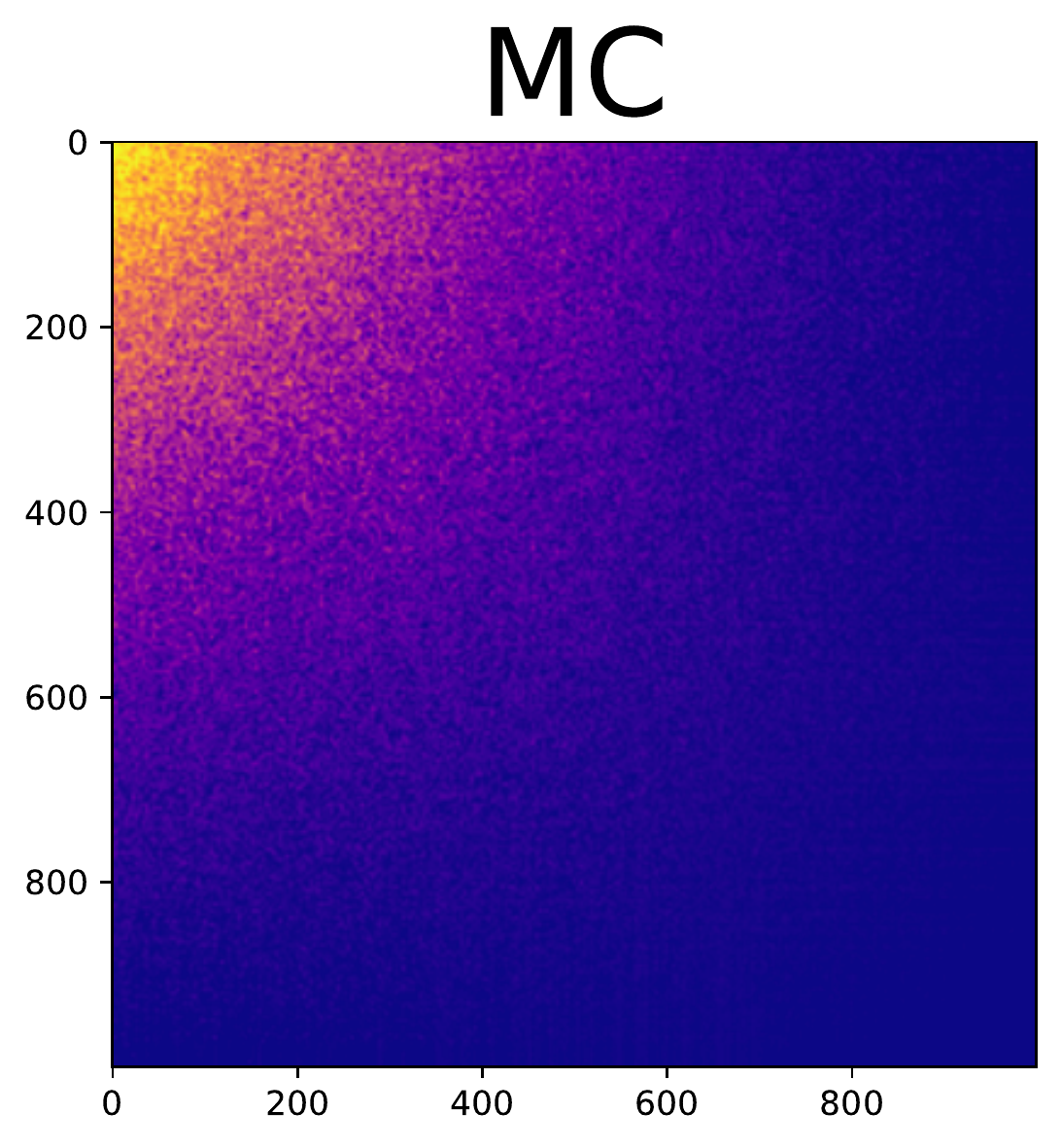}
    \includegraphics[height=1.9cm]{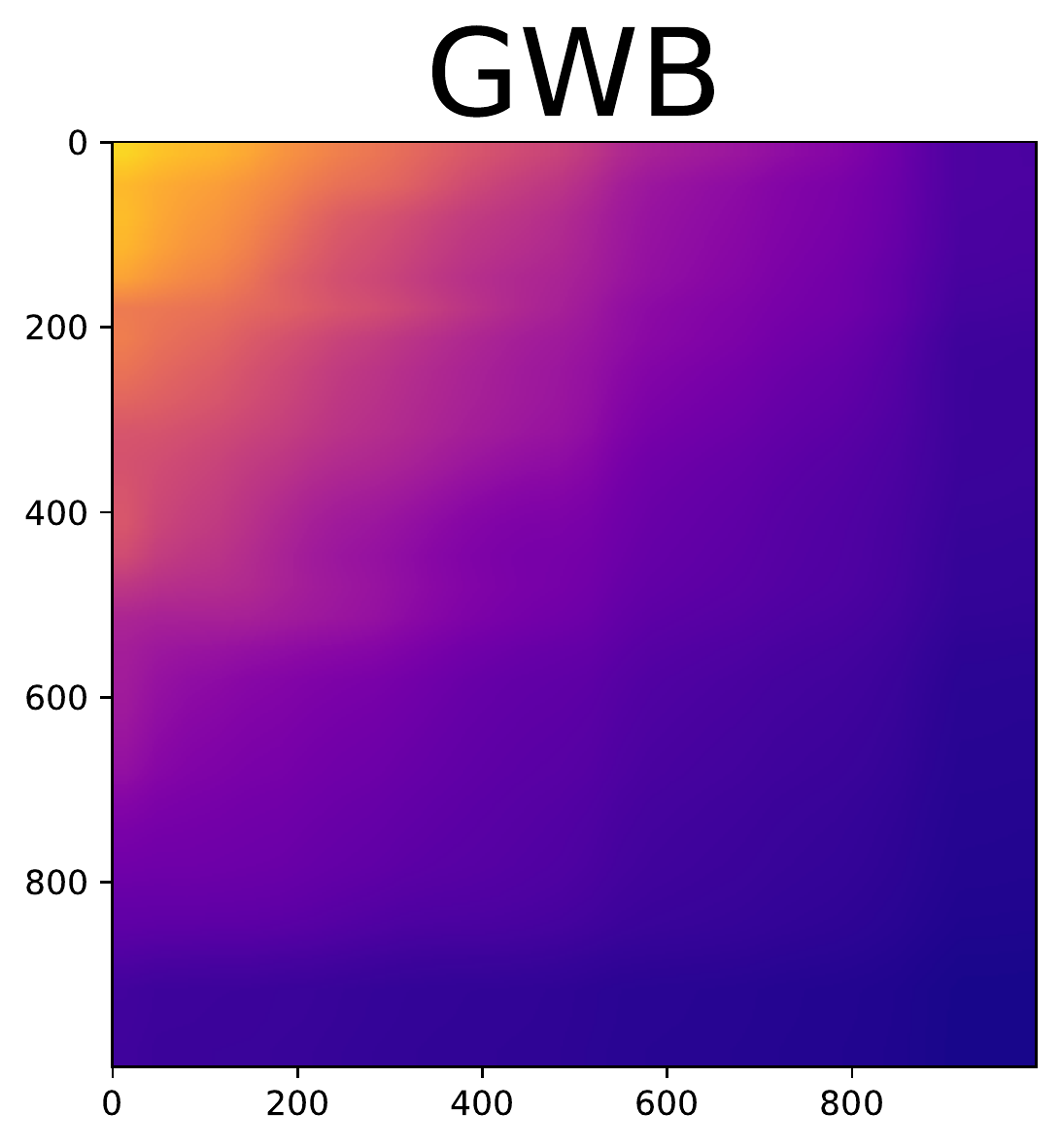}
    \includegraphics[height=1.9cm]{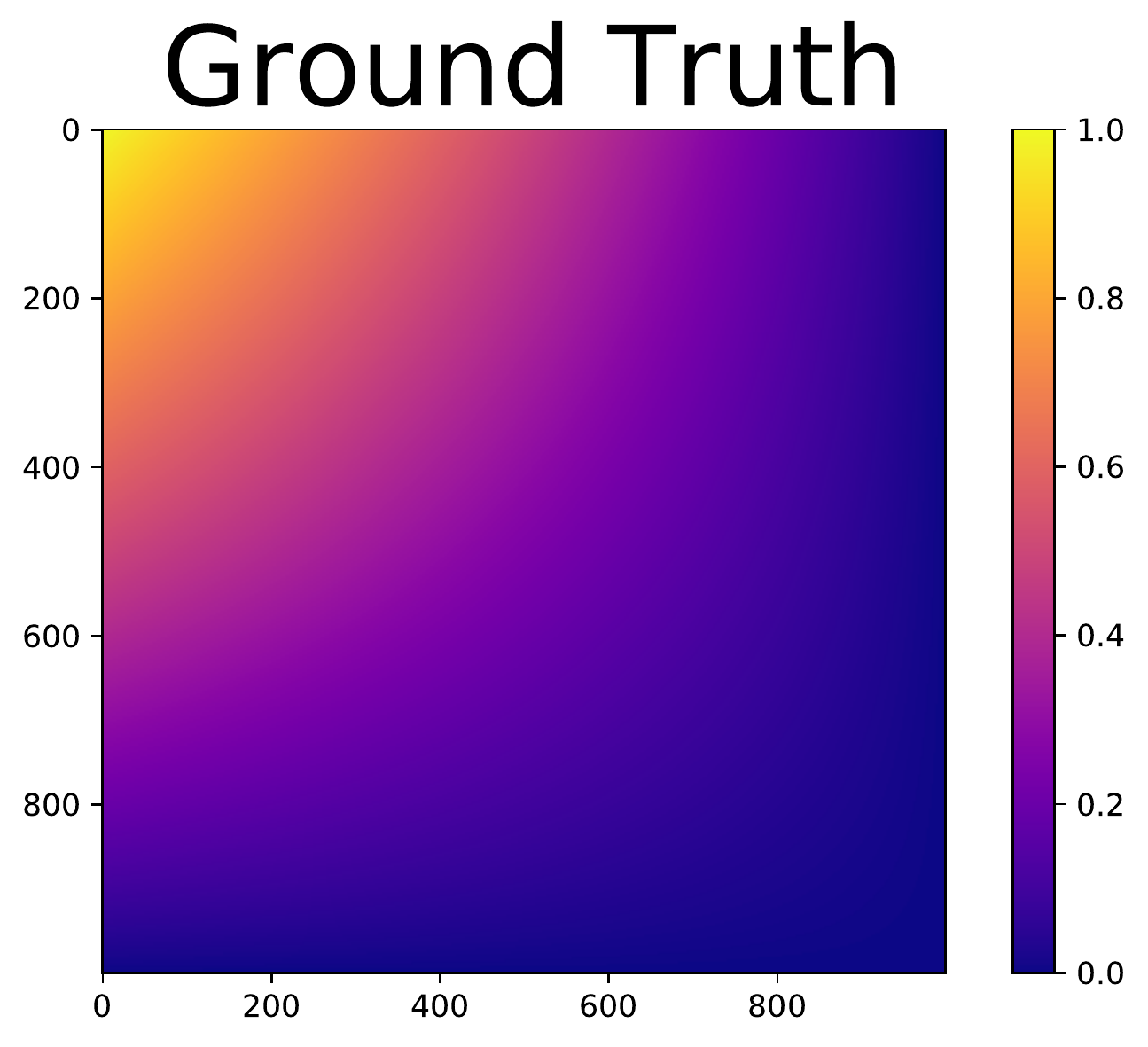}\label{fig:example1a}
    }\\
    \vspace{-4mm}
    \subfigure[$W(x, y)=|x - y|$ and $0\leq x, y\leq 1$]{
    \includegraphics[height=1.9cm]{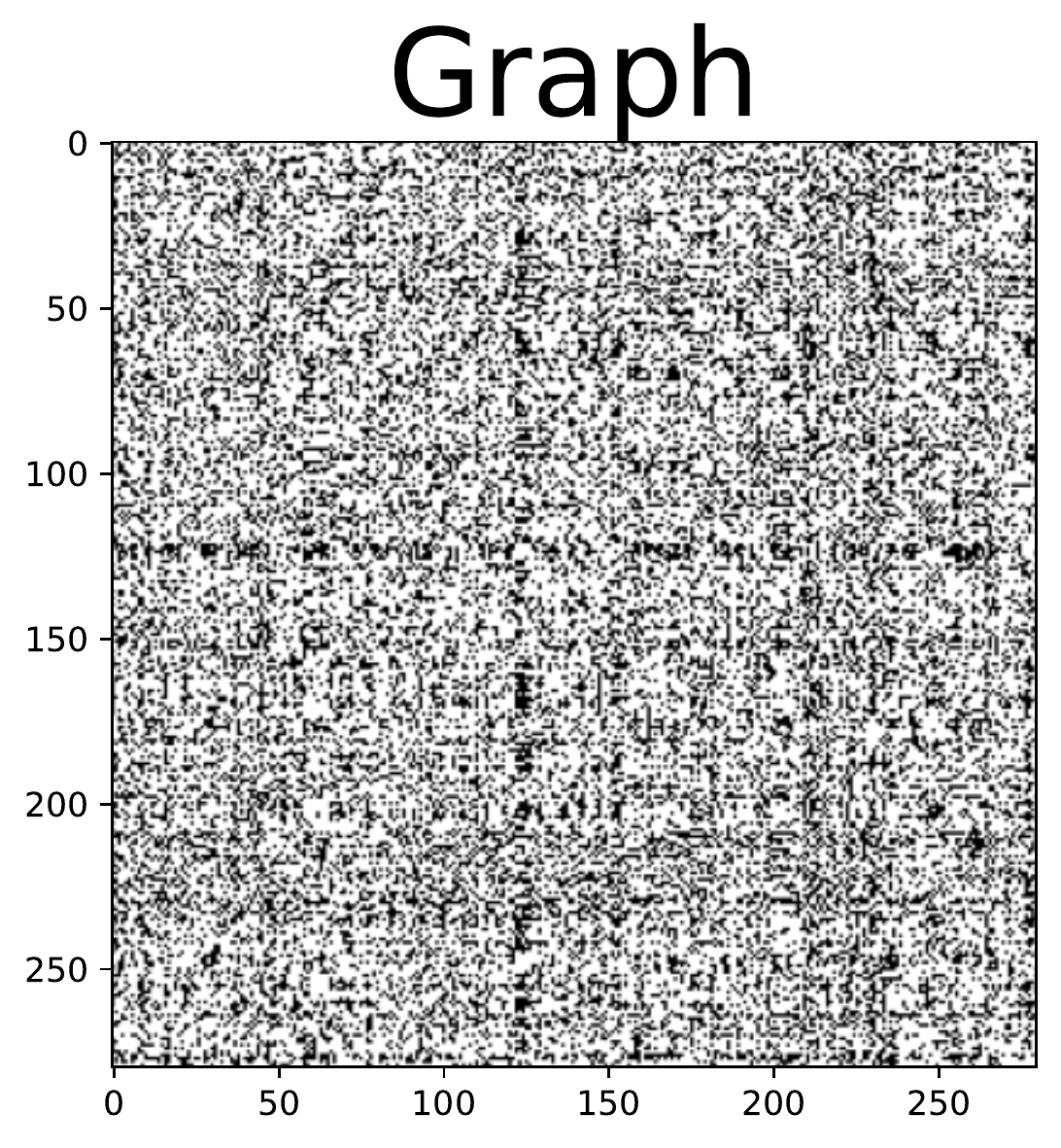}
    \includegraphics[height=1.9cm]{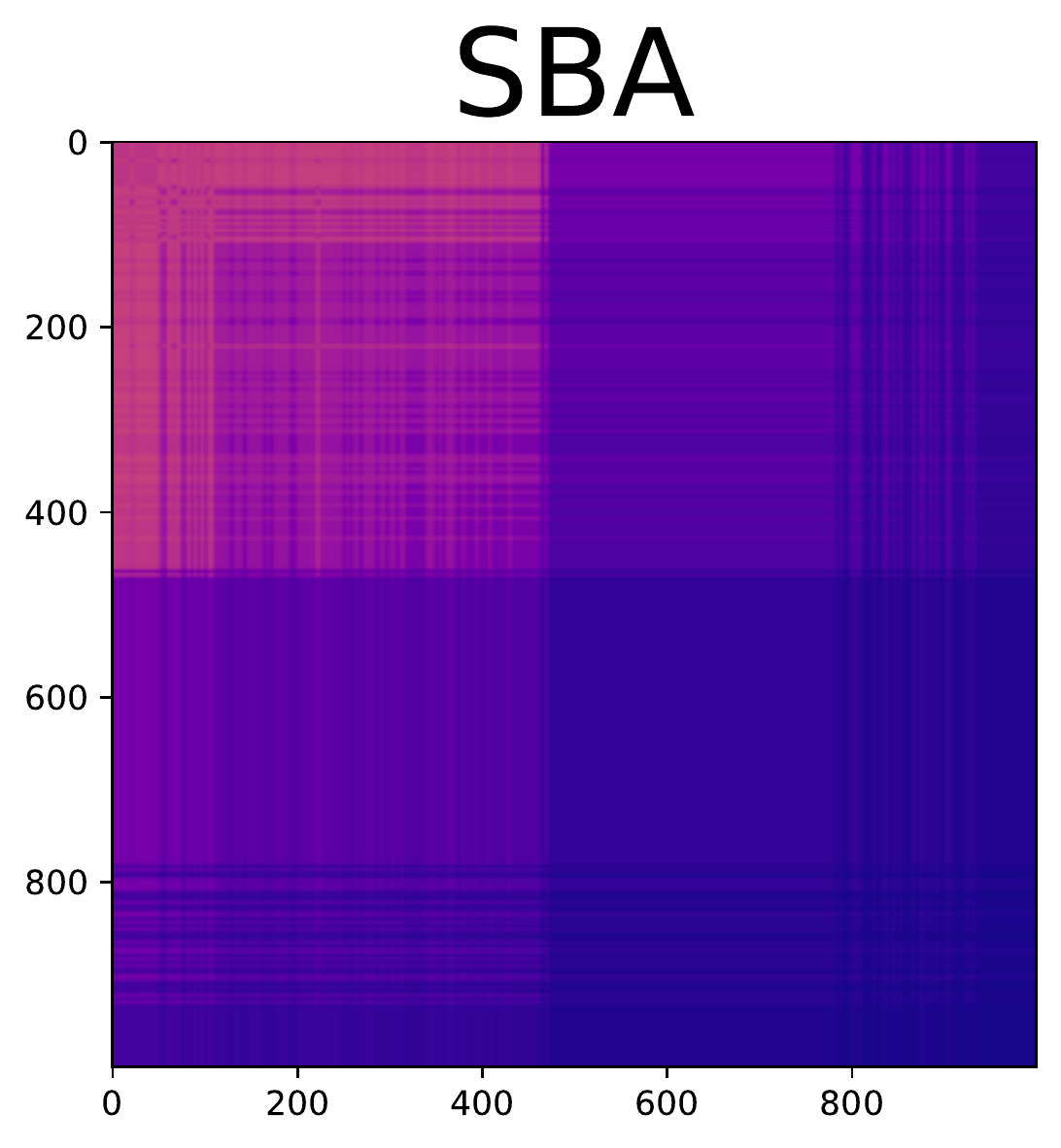}
    \includegraphics[height=1.9cm]{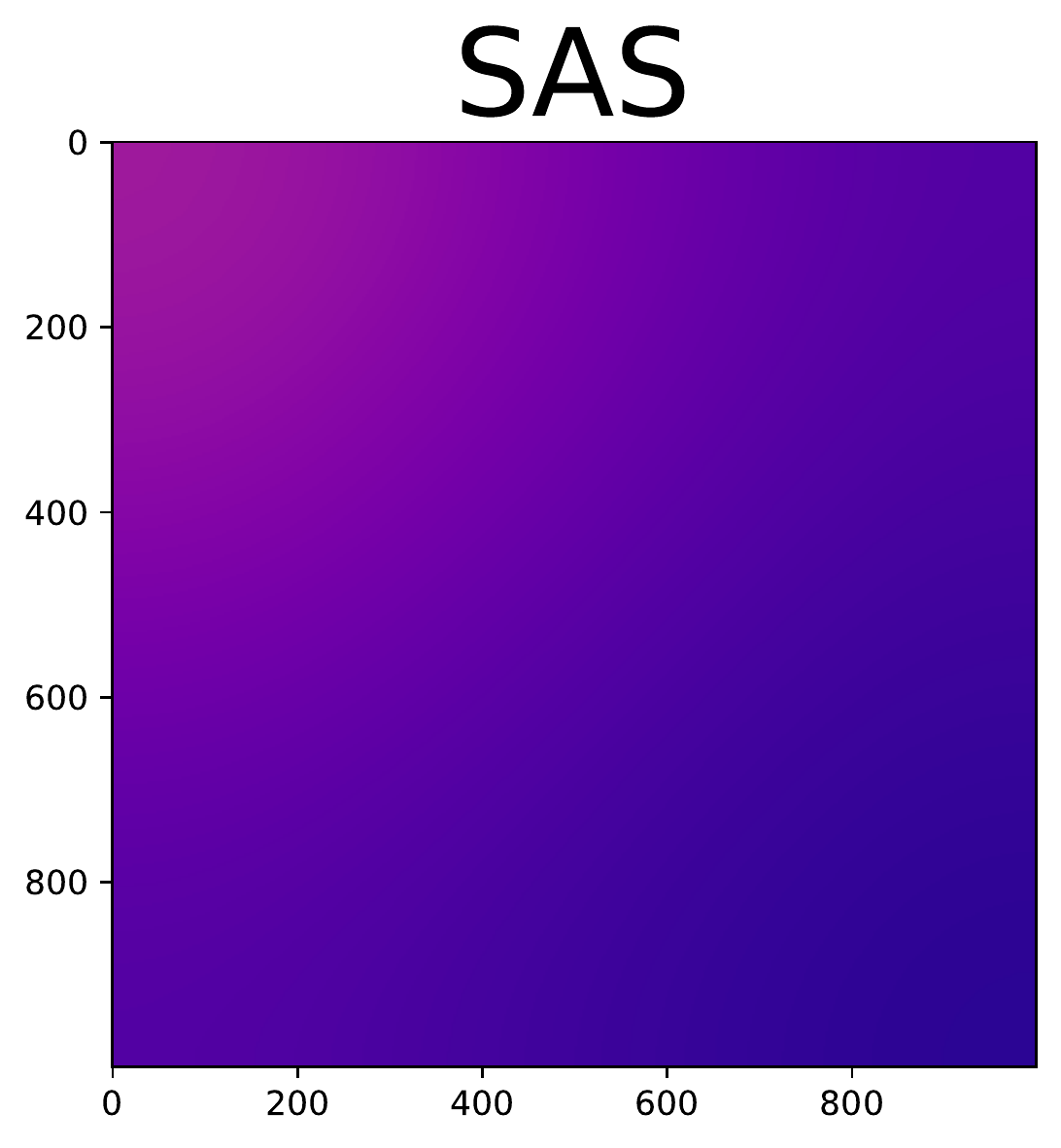}
    \includegraphics[height=1.9cm]{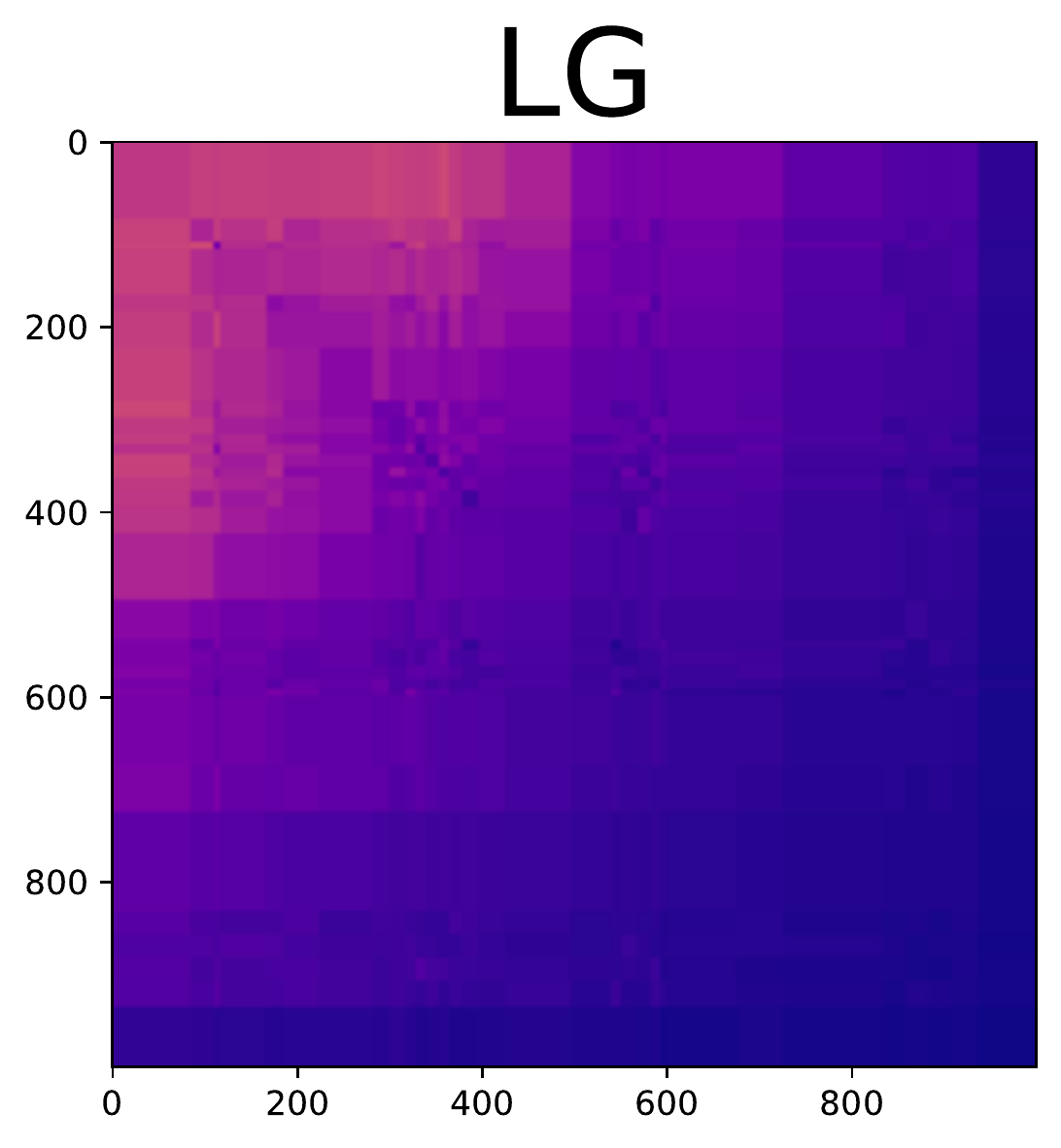}
    \includegraphics[height=1.9cm]{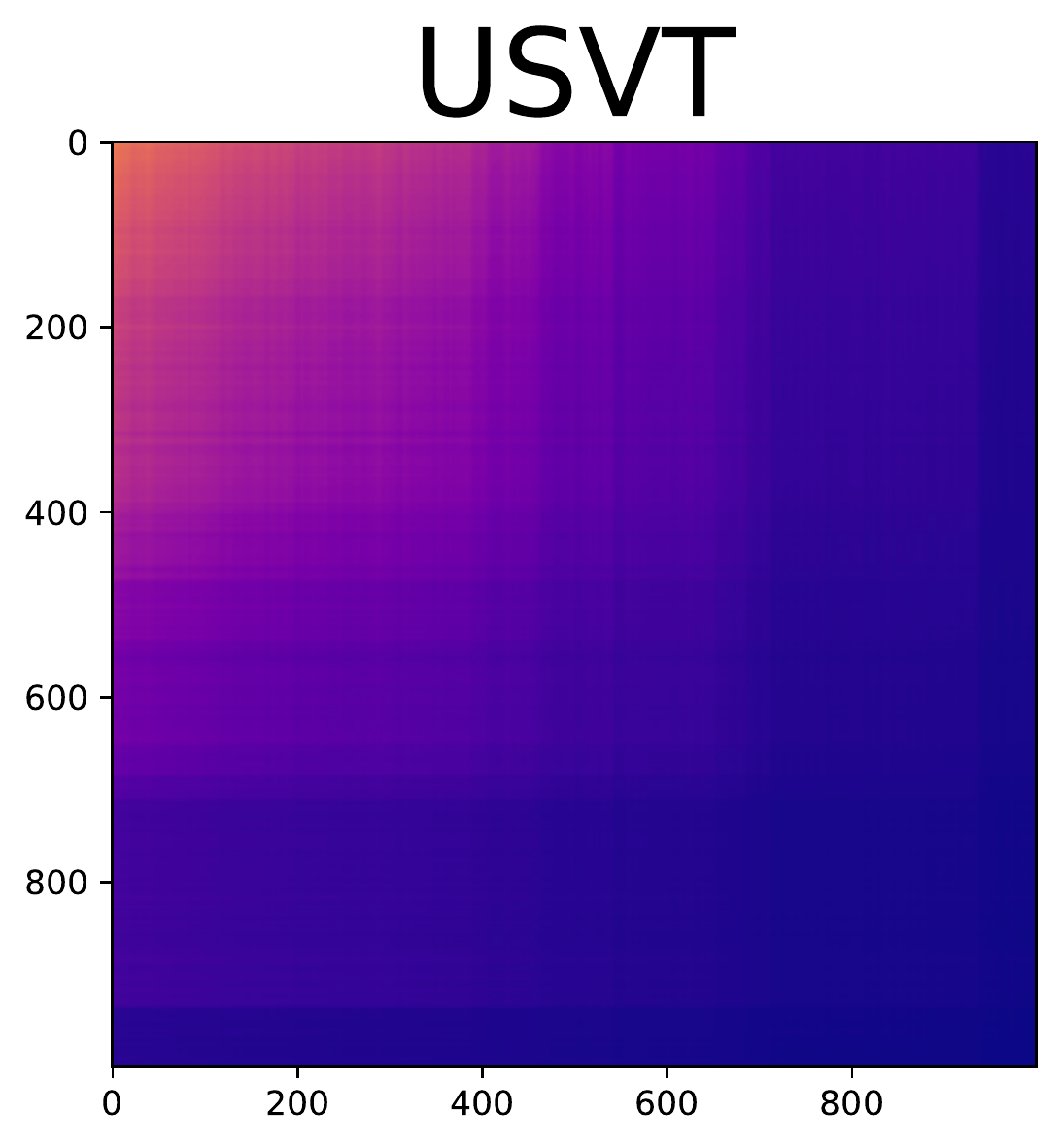}
    \includegraphics[height=1.9cm]{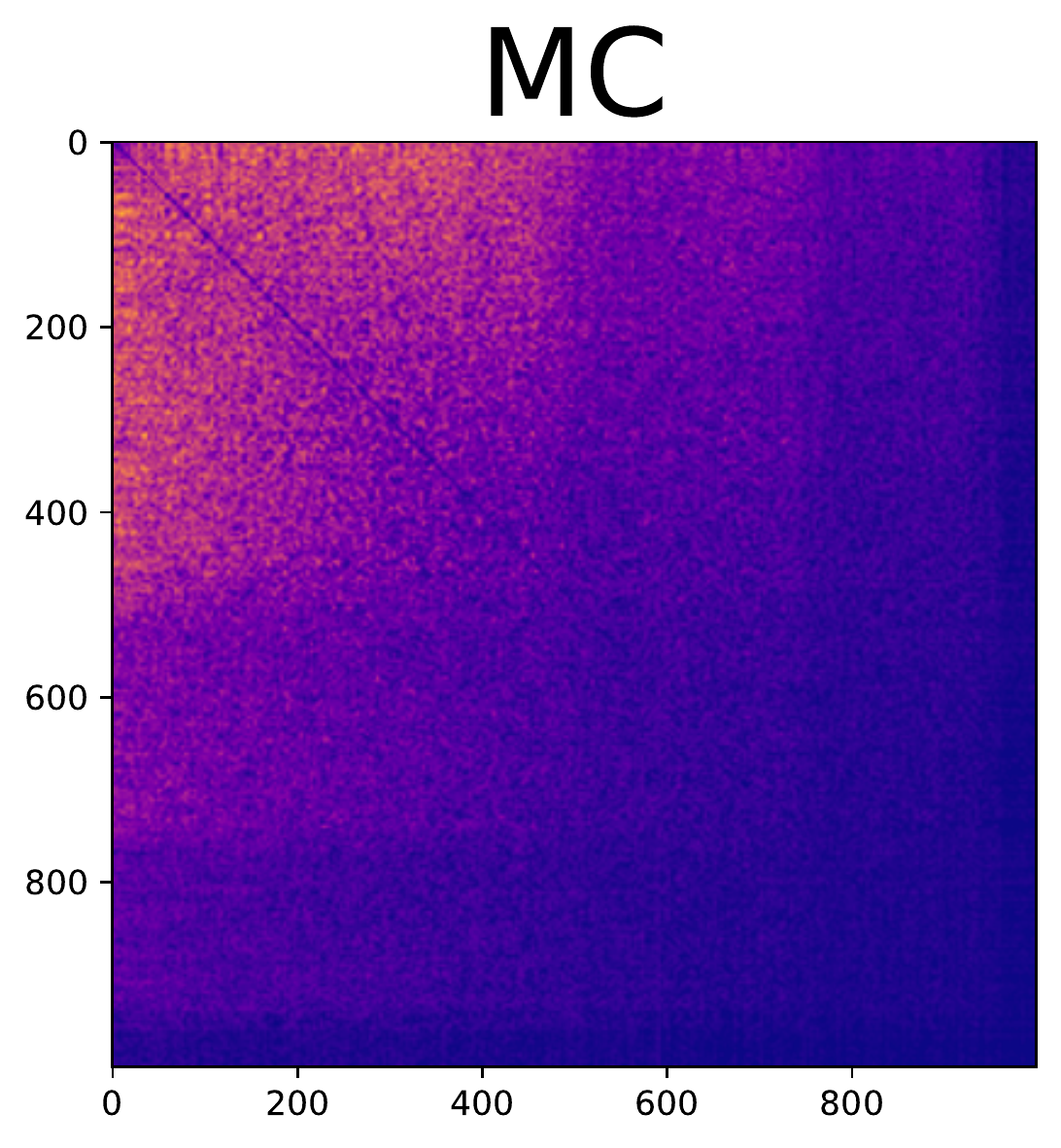}
    \includegraphics[height=1.9cm]{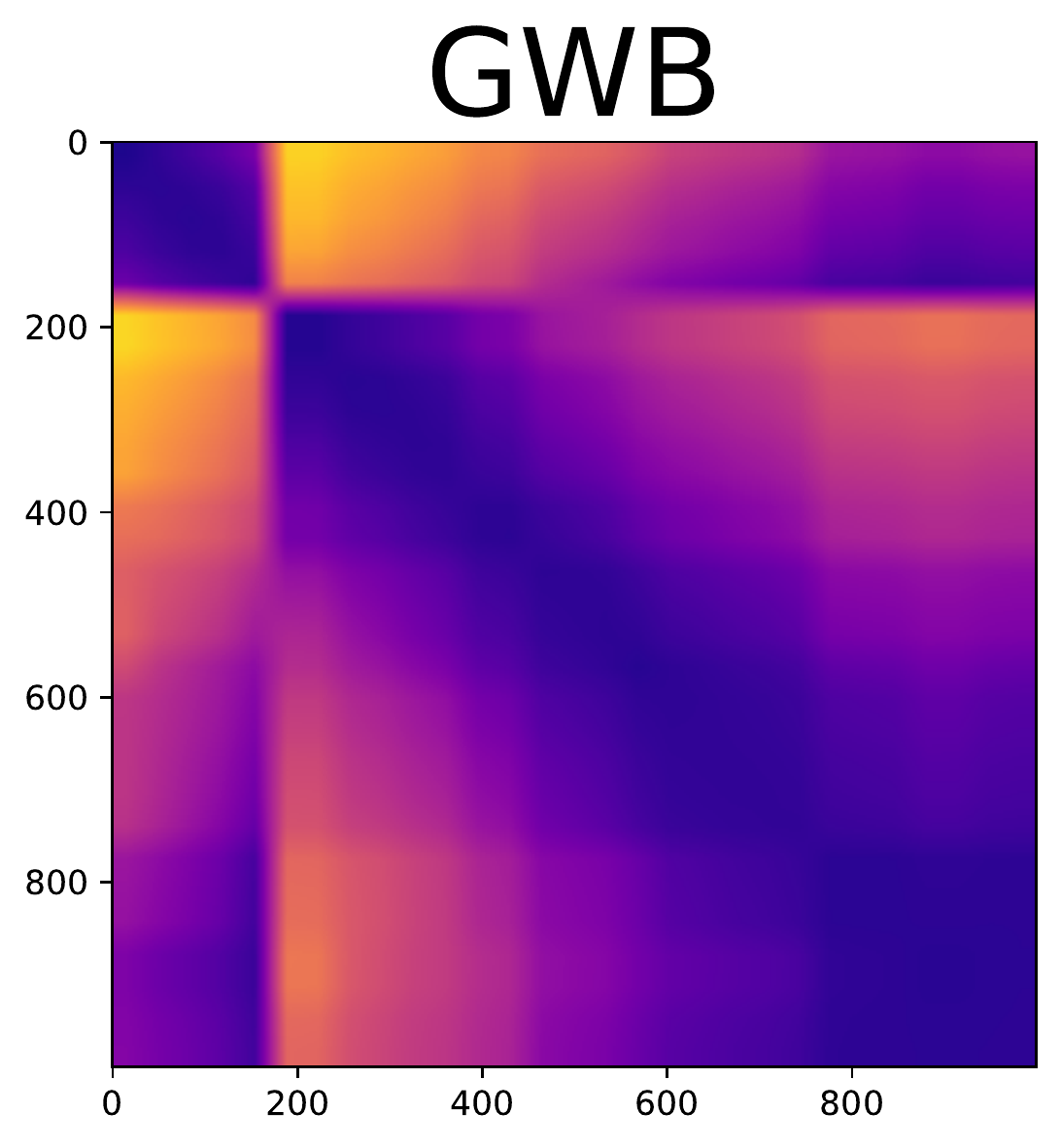}
    \includegraphics[height=1.9cm]{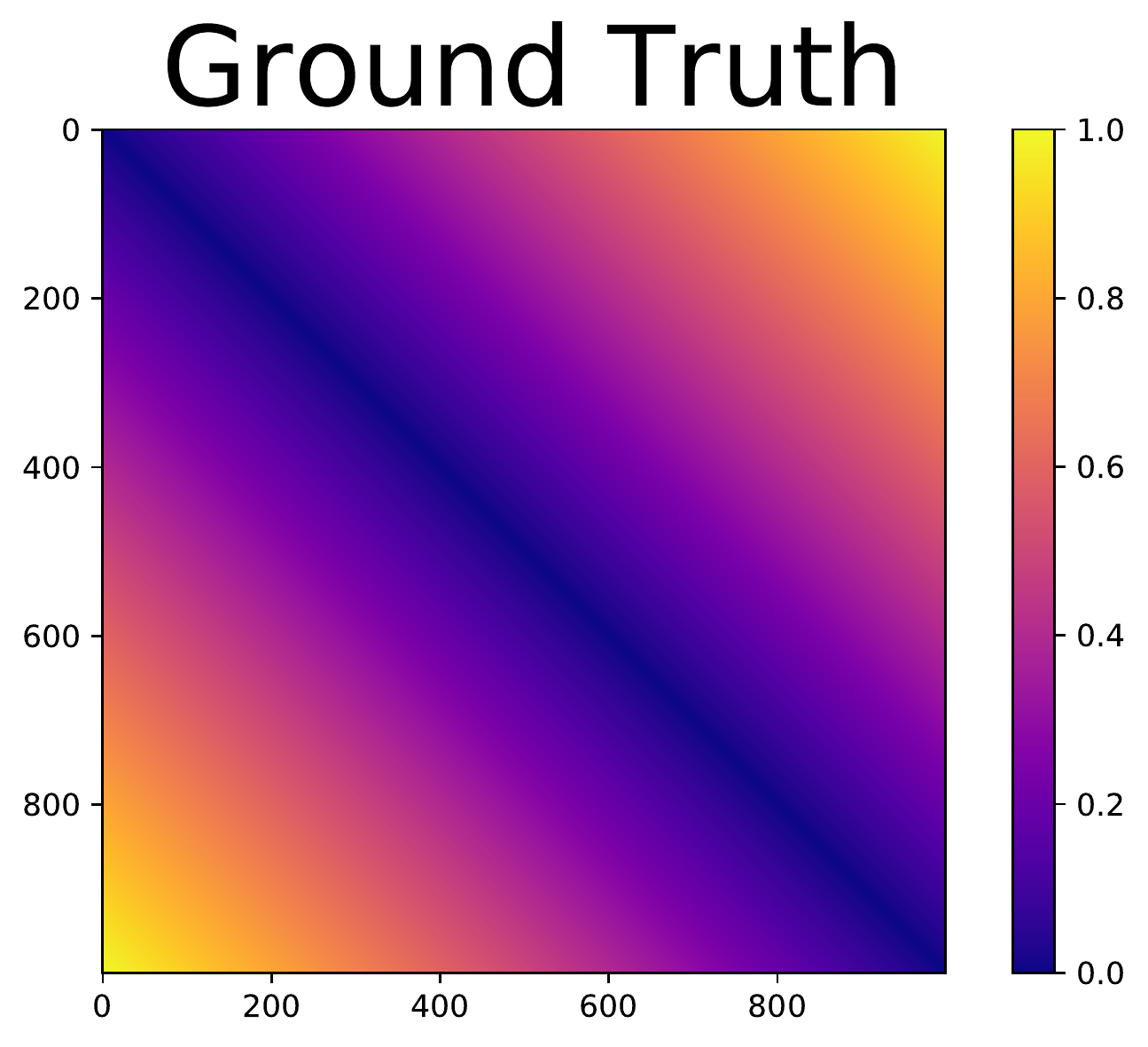}\label{fig:example1b}
    }
    \vspace{-8pt}
    \caption{\small{Illustrations of learning results obtained by various methods for different graphons. 
    In both (a) and (b), we visualize the graphon and its estimations with size $1,000\times 1,000$, and each estimation is derived based on $10$ graphs with less than $300$ nodes. 
    The node degrees of the graphs provide strong evidence to align graphs when learning the graphon in (a) but are useless for the graphon in (b). Our GWB method outperforms state-of-the-art methods. Especially in the challenging case (b), the estimation derived by our method can be aligned to the ground truth by a measure-preserving mapping, which is close to the ground truth under the cut distance.}}
    \label{fig:example1}
\end{figure*}

These methods require the observed graphs to be \textit{well-aligned}\footnote{``Well-aligned'' graphs have comparable size and the correspondence between their nodes is provided.} and generated by a single graphon. 
However, real-world graphs, $e.g.$, the social networks collected from different platforms and different time slots, often have complicated clustering structure, and the correspondence between their nodes is unknown in general. 
This violation limits the feasibility of the above learning methods in practice.
Specifically, these methods have to solve a multi-graph matching problem before learning graphons. 
Because of the NP-hardness of the matching problem, this preprocessing often introduces severe noise to the subsequent learning problem and leads to undesirable learning results.

To overcome the aforementioned challenges, we propose a new method to learn one or multiple graphons from unaligned graphs. 
Our method leverages step functions to estimate graphons.
It minimizes the Gromov-Wasserstein distance (GWD)~\cite{memoli2011gromov} between the step function of each observed graph and that of the target graphon, whose solution is a Gromov-Wasserstein barycenter (GWB) of the graphs~\cite{xu2019scalable}. 
We demonstrate that this learning strategy minimizes an upper bound of the cut distance~\cite{lovasz2012large} between the graphon and its step function, which leads to a computationally-efficient algorithm. 

To the best of our knowledge, our work makes the first attempt to learn graphons from unaligned graphs. 
Different from existing methods, which first match graphs heuristically and then estimate graphons, our method leverages the permutation-invariance of the GW distance and integrates graph matching implicitly in the estimation phase. 
As a result, our method mitigates bias caused by undesired matching processes. 
Given a graphon $W(x,y)$, if its marginal $W(y)=\int_{x\in\Omega}W(x,y)dx$ (or $W(x)=\int_{y\in\Omega}W(x,y)dy$) is very different from a constant function, the graphs generated by it can be aligned readily by sorting and matching their nodes according to their degrees. 
On the contrary, if its marginal is close to a constant function, it will be hard to align its graphs because the node degrees of the graphs' nodes are almost the same. 
As illustrated in Figure~\ref{fig:example1}, no matter whether it is easy to align the graphs or not, our method can successfully learn the graphons and consistently outperforms existing methods. 
Besides the basic GWB method, we design a smoothed GWB method to enhance the continuity of learned graphons. 
Additionally, to learn multiple graphons from the graphs with unknown clustering structures, we propose a mixture model of GWBs. 
These structured GWB models achieve encouraging learning results in some complicated scenarios.

\section{Proposed Method}
A graphon $W:\Omega^2\mapsto [0, 1]$ is defined on a probability space $(\Omega, \mu)$, where $\mu$ is a probability measure on the space $\Omega$. 
Each $W$ formulates a space of graphons, denoted as $\mathcal{W}$. 
Let $\{\mathcal{G}_m\}_{m=1}^M$ be a set of graphs generated by an unknown graphon $W$, whose sampling process is shown in (\ref{eq:generate_graph}). 
We want to estimate the graphon based on the observed graphs, making the estimation close to the ground truth under a specific metric.

\subsection{Approximate graphons by step functions}
A graphon can always be approximated by a step function in the cut norm~\cite{frieze1999quick}. 
For each $W\in\mathcal{W}$, its cut norm is defined as
\begin{eqnarray}\label{eq:cut_n}
\begin{aligned}
\|W\|_{\square}:=\sideset{}{_{\mathcal{X}, \mathcal{Y}\subset \Omega}}\sup\Bigl| \int_{\mathcal{X}\times\mathcal{Y}} W(x, y)dxdy\Bigr|,
\end{aligned}
\end{eqnarray}
where the supremum is taken over all measurable subsets $\mathcal{X}$ and $\mathcal{Y}$ of $\Omega$. 
Based on the cut norm, we can define a commonly-used metric called \textit{cut distance}~\cite{lovasz2012large} between $W_1, W_2\in\mathcal{W}$:
\begin{eqnarray}\label{eq:cut_d}
\begin{aligned}
\delta_{\square}(W_1, W_2) := \sideset{}{_{\phi \in \mathcal{S}_{\Omega}}}\inf\|W_1 - W_2^{\phi}\|_{\square},
\end{aligned}
\end{eqnarray}
where $\mathcal{S}_{\Omega}$ represents the set of measure-preserving mappings from $\Omega$ to $\Omega$.
Accordingly, we have $W_2^{\phi}(x,y)=W_2(\phi(x),\phi(y))$. 
The cut distance plays a central role in graphon theory. 
We say that two graphons $W_1$, $W_2$ are equivalent if $\delta_{\square}(W_1, W_2)=0$, denoted as $W_1\cong W_2$. 
The work in~\cite{borgs2008convergent} demonstrates that the quotient space $\widehat{\mathcal{W}}:=\mathcal{W}\setminus \cong$ is homeomorphic to the set
of graphons and $(\widehat{\mathcal{W}}, \delta_{\square})$ is a compact metric space. 
Similarly, we can define $\delta_1(W_1, W_2):=\sideset{}{_{\phi \in \mathcal{S}_{\Omega}}}\inf\|W_1 - W_2^{\phi}\|_{1}$, where $\|W\|_1:=\int_{\mathcal{X}\times\mathcal{Y}} | W(x, y)|dxdy$. 
According to their definitions, we have
\begin{eqnarray}\label{eq:cmp}
\begin{aligned}
\delta_{\square}(W_1, W_2) \leq \delta_{1}(W_1, W_2),~\forall~ W_1, W_2\in\mathcal{W}.
\end{aligned}
\end{eqnarray}

Let $\mathcal{P}=(\mathcal{P}_1,..,\mathcal{P}_K)$ be a partition of $\Omega$ into $K$ measurable sets. 
We define a step function $W_{\mathcal{P}}: \Omega^2\mapsto [0, 1]$ as
\begin{eqnarray}\label{eq:step}
\begin{aligned}
W_{\mathcal{P}}(x,y)=\sideset{}{_{k,k'=1}^{K}}\sum w_{kk'}1_{\mathcal{P}_k\times \mathcal{P}_{k'}}(x,y),
\end{aligned}
\end{eqnarray}
where each $w_{kk'}\in [0, 1]$ and the indicator function $1_{\mathcal{P}_k\times \mathcal{P}_{k'}}(x,y)$ is 1 if $(x, y)\in\mathcal{P}_{k}\times\mathcal{P}_{k'}$, otherwise it is 0. 
The weak regularity lemma~\cite{lovasz2012large} shown below guarantees that every graphon can be approximated well in the cut norm by step functions.
\begin{theorem}[Weak Regularity Lemma~\cite{lovasz2012large}]\label{thm:wrl}
For every graphon $W\in\mathcal{W}$ and $K\geq 1$, there always exists a step function $W_{\mathcal{P}}$ with $|\mathcal{P}|=K$ steps such that
\begin{eqnarray}\label{eeq:wrl}
\begin{aligned}
\|W - W_{\mathcal{P}}\|_{\square} \leq \frac{2}{\sqrt{\log K}}\|W\|_{L_2}.
\end{aligned}
\end{eqnarray}
\end{theorem}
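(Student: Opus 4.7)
The plan is to prove this by the classical energy increment argument of Frieze and Kannan. For any partition $\mathcal{P}$ of $\Omega$, write $W_{\mathcal{P}}$ for the step function whose value on each block $\mathcal{P}_k\times\mathcal{P}_{k'}$ is the average of $W$ over that block (equivalently, the $L_2$-projection of $W$ onto the subspace of step functions defined on $\mathcal{P}\times\mathcal{P}$). Define the energy $\mathcal{E}(\mathcal{P}) := \|W_{\mathcal{P}}\|_{L_2}^2$. Two elementary properties are immediate: $0\leq \mathcal{E}(\mathcal{P}) \leq \|W\|_{L_2}^2$, and if $\mathcal{P}'$ refines $\mathcal{P}$ then by Pythagoras
\begin{equation*}
\mathcal{E}(\mathcal{P}') - \mathcal{E}(\mathcal{P}) = \|W_{\mathcal{P}'} - W_{\mathcal{P}}\|_{L_2}^2 \geq 0.
\end{equation*}

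The crux is the following energy-increment lemma: if $\|W - W_{\mathcal{P}}\|_{\square} > \epsilon$, then $\mathcal{P}$ admits a refinement $\mathcal{P}'$ with $|\mathcal{P}'|\leq 4|\mathcal{P}|$ and $\mathcal{E}(\mathcal{P}')\geq \mathcal{E}(\mathcal{P})+\epsilon^2$. To prove it, I would pick measurable witnesses $\mathcal{X},\mathcal{Y}\subset\Omega$ of the cut-norm bound in (\ref{eq:cut_n}), so that $|\int_{\mathcal{X}\times\mathcal{Y}}(W - W_{\mathcal{P}})\,dxdy| > \epsilon$, and let $\mathcal{P}'$ be the common refinement of $\mathcal{P}$ with $\{\mathcal{X},\mathcal{X}^c\}$ and $\{\mathcal{Y},\mathcal{Y}^c\}$. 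Then $1_{\mathcal{X}\times\mathcal{Y}}$ lies in the step-function subspace associated to $\mathcal{P}'$, hence the $L_2$-projection identity gives $\int (W-W_{\mathcal{P}'})\cdot 1_{\mathcal{X}\times\mathcal{Y}} = 0$, so $\int (W_{\mathcal{P}'}-W_{\mathcal{P}})\cdot 1_{\mathcal{X}\times\mathcal{Y}} > \epsilon$. Cauchy--Schwarz together with $\|1_{\mathcal{X}\times\mathcal{Y}}\|_{L_2}\leq 1$ yields $\|W_{\mathcal{P}'} - W_{\mathcal{P}}\|_{L_2}^2 > \epsilon^2$, which via Pythagoras is the claimed energy jump. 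The factor $4$ is because each part of $\mathcal{P}$ is cut by the four sets $\mathcal{X}\cap\mathcal{Y},\mathcal{X}\cap\mathcal{Y}^c,\mathcal{X}^c\cap\mathcal{Y},\mathcal{X}^c\cap\mathcal{Y}^c$.

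Finally, I would iterate starting from the trivial partition $\mathcal{P}_0=\{\Omega\}$: as long as the current $\mathcal{P}_t$ violates the target inequality with $\epsilon := (2/\sqrt{\log K})\,\|W\|_{L_2}$, apply the lemma to obtain $\mathcal{P}_{t+1}$. Since energy is bounded by $\|W\|_{L_2}^2$ and grows by at least $\epsilon^2$ per step, the process terminates within $T\leq \|W\|_{L_2}^2/\epsilon^2$ iterations, and at termination $|\mathcal{P}_T|\leq 4^T$. The choice of $\epsilon$ is made precisely so that $4^{\|W\|_{L_2}^2/\epsilon^2}\leq K$, which gives the required step function.

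The main obstacle is really only tracking the constants: the Pythagoras and Cauchy--Schwarz steps are one-liners, but one must reconcile the factor-$4$ blow-up per refinement against the base of the logarithm in the statement (so that $4^T\leq K$ becomes $T\leq (\log K)/(\log 4)$, matching $\epsilon^2 = 4\|W\|_{L_2}^2/\log K$). Once the energy-increment lemma is established with the right exponents, everything else is a bounded-sequence argument.
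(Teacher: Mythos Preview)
Your argument is the standard Frieze--Kannan energy-increment proof and is correct as written; the constant tracking also checks out (with $\epsilon^2 = 4\|W\|_{L_2}^2/\log K$ one gets $T\leq (\log K)/4$ and hence $|\mathcal{P}_T|\leq 4^{(\log K)/4}\leq K$ regardless of the logarithm base). Note, however, that the paper does not supply its own proof of this theorem: it is quoted verbatim as a known result from \cite{frieze1999quick,lovasz2012large} and used as a black box to justify approximating graphons by step functions, so there is no in-paper argument to compare against---your proof is precisely the one found in those cited references.
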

Note that a corollary of this lemma is $\delta_{\square}(W, W_{\mathcal{P}}) \leq \frac{2}{\sqrt{\log K}}\|W\|_{L_2}$ because $\delta_{\square}(W, W_{\mathcal{P}})\leq \|W - W_{\mathcal{P}}\|_{\square}$.

\subsection{Oracle estimator}
Based on the weak regularity lemma, we would like to learn a step function ${W}_{\mathcal{P}}$ from observed graphs $\{\mathcal{G}_m\}_{m=1}^M$ such that the cut distance between the step function and the ground truth, $i.e.$, $\delta_{\square}(W, {W}_{\mathcal{P}})$, is minimized. 
Note that a graph $\mathcal{G}$ can also be represented as a step function.
\begin{definition}\label{def:step}
For a graph with a node set $\mathcal{V}=\{1,...,N\}$ and an adjacency matrix $\bm{A}$, we can represent it as a step function with $N$ equitable partitions of $\Omega$, $i.e.$, $\mathcal{P}=\{\mathcal{P}_n\}_{n=1}^{N}$,\footnote{The equitable partitions have the same size, $i.e.$, $|\mathcal{P}_n|=|\mathcal{P}_{n'}|$ for all $n\neq n'$.} denoted as $G_{\mathcal{P}}$, where $G_{\mathcal{P}}(x, y)=\frac{1}{N^2}\sum_{n,n'=1}^{N} a_{nn'}1_{\mathcal{P}_{n}\times \mathcal{P}_{n'}}(x,y)$. 
\end{definition}

Ideally, if we know the positions of a graph's nodes, $i.e.$, the $v_n$'s in (\ref{eq:generate_graph}), we can derive an isomorphism of the graph according to the order of the positions and obtain an ``oracle'' step function, denoted as $\hat{\mathcal{G}}$ and $\hat{G}_{\mathcal{P}}$, respectively. 
Applying this sorting operation to $\{\mathcal{G}_m\}_{m=1}^M$, we obtain a set of well-aligned graphs $\{\hat{\mathcal{G}}_m\}_{m=1}^M$ and a set of oracle step functions $\{\hat{G}_{m, \mathcal{P}^m}\}_{m=1}^M$, where the number of partitions $|\mathcal{P}^m|$ is equal to the number of nodes in $\hat{\mathcal{G}}_m$.
Accordingly, we achieve an oracle estimator of $W$ as follows:
\begin{eqnarray}\label{eq:oracle}
\begin{aligned}
W_{O} = \frac{1}{M}\sideset{}{_{m=1}^{M}}\sum \hat{G}_{m, \mathcal{P}^m}.
\end{aligned}
\end{eqnarray}
This oracle estimator provides a consistent estimation of $W$:
\begin{theorem}\label{thm:oracle}
For every $W\in\mathcal{W}$, let $\{\hat{G}_{m, \mathcal{P}^m}\}_{m=1}^M$ be a set of oracle step functions defined by Definition~\ref{def:step}. 
We have
\begin{eqnarray}\label{eq:bound1}
\delta_{\square}(W, W_O)\leq \frac{C}{\min_m |\mathcal{P}^m|},
\end{eqnarray}
where $C$ is a constant.
\end{theorem}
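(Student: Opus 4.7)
The plan is to dispense with the infimum over $\mathcal{S}_\Omega$ in the cut distance entirely by exploiting that each oracle step function $\hat G_{m,\mathcal P^m}$ already has its nodes sorted by their latent positions, so $W$ and $\hat G_{m,\mathcal P^m}$ are naturally aligned under the identity map on $\Omega$. Taking $\phi$ equal to the identity in the definition of $\delta_{\square}$ and then applying the triangle inequality yields
\[
\delta_{\square}(W, W_O) \le \|W - W_O\|_\square \le \tfrac{1}{M}\sum_{m=1}^{M}\|W - \hat G_{m, \mathcal P^m}\|_\square,
\]
so it suffices to bound each term $\|W - \hat G_{m,\mathcal P^m}\|_\square$ by $O(1/|\mathcal P^m|)$ and then take the maximum over $m$.

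\textbf{Splitting into discretization and sampling.} For a fixed $m$, I would introduce the auxiliary step function $\bar W_m$ that takes value $W(v_{(n)}^m, v_{(n')}^m)$ on cell $\mathcal P_n^m \times \mathcal P_{n'}^m$, where $v_{(n)}^m$ is the $n$-th order statistic of the sampled positions. The triangle inequality splits the error into
\[
\|W - \hat G_{m,\mathcal P^m}\|_\square \le \|W - \bar W_m\|_\square + \|\bar W_m - \hat G_{m,\mathcal P^m}\|_\square,
\]
a deterministic quantization error (conditional on the positions) and a mean-zero sampling fluctuation, since $\mathbb{E}[a_{nn'}\mid \text{positions}] = W(v_n^m, v_{n'}^m)$ which is exactly the value of $\bar W_m$ on the corresponding cell.

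\textbf{Bounding the two pieces.} For the discretization error, a regularity property of $W$ (absorbed into the constant $C$, e.g.\ Lipschitz continuity) together with the standard uniform order-statistic concentration $|v_{(n)}^m - n/N_m| = O(1/N_m)$ yields $\|W - \bar W_m\|_\infty = O(1/N_m)$ on each cell, and hence $\|W - \bar W_m\|_\square \le \|W - \bar W_m\|_1 = O(1/N_m)$ via the inequality in (\ref{eq:cmp}). For the sampling fluctuation, each cell has area $N_m^{-2}$ and carries a centered Bernoulli deviation of magnitude at most one, so a Hoeffding bound on the integral over any fixed rectangle $S \times T \subset [N_m]^2$ combined with a union bound over all such rectangles shows that this piece has cut norm of lower order than $1/N_m$.

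\textbf{Main obstacle.} Combining, $\|W - \hat G_{m,\mathcal P^m}\|_\square \le C/N_m$, and averaging over $m$ gives the claim. The genuinely tricky step is the sampling bound: a direct Hoeffding-plus-union-bound on a single graph only gives $O(1/\sqrt{N_m})$ in cut norm, which is worse than the discretization rate, so one must either exploit the averaging over the $M$ independent graphs (to gain an extra $1/\sqrt{M}$ factor) or appeal to a sharper cut-norm concentration inequality for inhomogeneous random graphs (a Grothendieck- or matrix-Bernstein-type argument) in order to push the sampling noise strictly below $1/N_m$. A secondary subtlety is pinning down which regularity of $W$ gets absorbed into $C$: without some smoothness assumption (Lipschitz or Hölder), the weak regularity lemma alone yields only $O(1/\sqrt{\log N_m})$ and the stronger rate $O(1/N_m)$ in the statement cannot be extracted.
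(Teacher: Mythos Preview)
Your reduction via the identity map and the triangle inequality matches the paper's skeleton exactly. The paper diverges from you at two points. First, instead of bounding $\delta_\square$ by the cut norm $\|\cdot\|_\square$, it passes all the way up to the $L_2$ norm (using $\delta_\square(W_1,W_2)\le \|W_1-W_2\|_{L_2}$ from~\cite{janson2013graphons}), so the triangle inequality is applied in $L_2$ rather than in $\|\cdot\|_\square$. Second, and more importantly, the paper does \emph{not} split each summand into a discretization piece and a sampling piece; it simply invokes a step-function approximation lemma from the supplementary material of~\cite{chan2014consistent} to assert $\|W-\hat G_{m,\mathcal P^m}\|_{L_2}\le C/|\mathcal P^m|$ outright, with $C$ absorbing whatever regularity of $W$ that lemma requires (piecewise Lipschitz in their framework). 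Your two-piece decomposition is more transparent about where smoothness of $W$ enters and is honest about the rate obstacle---the $O(1/\sqrt{N_m})$ sampling fluctuation you flag is a genuine subtlety, and it would in fact be \emph{worse} under the paper's $L_2$ bound than under your cut-norm bound, but the paper sidesteps it entirely by the external citation. In short: same skeleton, but the paper's proof is a three-line delegation to a known result, whereas yours unpacks the mechanism and correctly locates the place where an additional assumption or a sharper concentration inequality is needed.
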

\begin{proof}
For an arbitrary graphon $W$ and its oracle estimator $W_{O}$, we have 
\begin{eqnarray}
\begin{aligned}
\delta_{\square}(W, W_O)&=\delta_{\square}(W, \frac{1}{M}\sideset{}{_{m=1}^{M}}\sum \hat{G}_{m, \mathcal{P}^m})\\
&\leq \Bigl\|W - \frac{1}{M}\sideset{}{_{m=1}^{M}}\sum \hat{G}_{m, \mathcal{P}^m}\Bigr\|_{L_2}\\
&\leq \frac{1}{M}\sideset{}{_{m=1}^{M}}\sum \|W - \hat{G}_{m, \mathcal{P}^m}\|_{L_2}\\
&\leq \frac{1}{M}\sideset{}{_{m=1}^{M}}\sum\frac{C}{|\mathcal{P}^m|}
\leq \frac{C}{\min_m |\mathcal{P}^m|}.
\end{aligned}
\end{eqnarray}
The first inequality is based on the fact that for arbitrary two graphons, their cut distance is smaller than their $L_2$ distance~\cite{janson2013graphons}. 
The second inequality is the triangle inequality.
The third inequality is a corollary of the step function approximation lemma in~\cite{chan2014consistent}, whose derivation corresponds to the second proof shown in the supplementary file of the reference.
In particular, the constant $C$ corresponds to the supremum of the absolute difference between the graphon $W$ and its step function $\hat{G}$, which is independent of the number of partitions. 
\end{proof}

\subsection{Learning graphons via GW barycenters}
The oracle estimator above is unavailable in practice because real-world graphs are generally unaligned -- neither the positions of their nodes nor the correspondence between them is provided. 
Given such unaligned graphs, traditional learning methods first match observed graphs and then estimate the oracle step functions. 
As illustrated in Figure~\ref{fig:example1b}, this strategy often leads to failures because the matching step is NP-hard and creates wrongly-aligned graphs.

To mitigate the dependency on well-aligned graphs (and their oracle step functions), we propose a new learning strategy. 
Specifically, considering the oracle estimator, we have 
\begin{eqnarray}\label{eq:tri}
\begin{aligned}
&\delta_{\square}(W, W_{\mathcal{P}})\\
\leq &\delta_{\square}(W, W_O) + \delta_{\square}(W_O, W_{\mathcal{P}})\\
= &\delta_{\square}(W, W_O) + \delta_{\square}(\frac{1}{M}\sideset{}{_{m=1}^{M}}\sum \hat{G}_{m, \mathcal{P}^m}, W_{\mathcal{P}})\\
\leq &\delta_{\square}(W, W_O) + \frac{1}{M}\sideset{}{_{m=1}^{M}}\sum \delta_{\square}(\hat{G}_{m, \mathcal{P}^m}, W_{\mathcal{P}})\\
=&\delta_{\square}(W, W_O) + \frac{1}{M}\sideset{}{_{m=1}^{M}}\sum \delta_{\square}(G_{m, \mathcal{P}^m}, W_{\mathcal{P}})\\
\leq &\delta_{\square}(W, W_O) + \frac{1}{M}\sideset{}{_{m=1}^{M}}\sum \delta_{1}(G_{m, \mathcal{P}^m}, W_{\mathcal{P}}).
\end{aligned}
\end{eqnarray}
The first inequality in (\ref{eq:tri}) is the triangle inequality, and
the second inequality is derived according to the definition of cut distance.
The latter is manifested because $\delta_{\square}(G_{\mathcal{P}}, \hat{G}_{\mathcal{P}})=0$ ($i.e.$, $G_{\mathcal{P}} \cong \hat{G}_{\mathcal{P}}$).
Finally, the third inequality is based on (\ref{eq:cmp}).
Theorem~\ref{thm:oracle} and (\ref{eq:tri}) indicate that we can minimize an upper bound of $\delta_{\square}(W, W_{\mathcal{P}})$ by solving the following optimization problem:
\begin{eqnarray}\label{eq:opt1}
\begin{aligned}
\sideset{}{_{W_{\mathcal{P}}}}\min \frac{1}{M}\sideset{}{_{m=1}^{M}}\sum \delta_{1}(G_{m, \mathcal{P}^m}, W_{\mathcal{P}}).
\end{aligned}
\end{eqnarray}
This strategy does not need to estimate the oracle step function, because it directly considers the $\delta_1$ distance between observed graphs and the proposed step function. 
To solve this problem, we derive a computationally-efficient alternative of $\delta_1$ based on its equivalent definition shown below. 
\begin{theorem}[Remark 6.13 in~\cite{janson2013graphons}]~\label{thm:eq}
Let $W_1$ and $W_2$ be two graphons defined on the probability spaces
$(\Omega_1, \mu_1)$ and $(\Omega_2, \mu_2)$, respectively. 
The $\delta_{1}(W_1, W_2)$ can be equivalently defined as
$\inf_{\pi\in \Pi(\mu_1, \mu_2)}\int_{(\Omega_1\times\Omega_2)^2}|W_1(x,y)-W_2(x', y')|d\pi(x,x')d\pi(y,y')$, 
where $\Pi(\mu_1,\mu_2)=\{\pi~|~\pi\geq0,~\int_{y\in\Omega_2}d\pi(x,y)=\mu_1,~\int_{x\in\Omega_1}d\pi(x,y)=\mu_2\}$ contains all measures on $\Omega_1\times\Omega_2$ having marginals $\mu_1$ and $\mu_2$.
\end{theorem}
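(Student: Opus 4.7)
My plan is to establish the claimed equality by proving both inequalities between the original measure-preserving-map formulation of $\delta_1$ and the coupling formulation. Throughout I assume $(\Omega_i,\mu_i)$ are standard atomless probability spaces, the setting of Janson's reference, so that transport-theoretic approximation tools are available.

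For the easy direction (coupling infimum $\leq \delta_1(W_1,W_2)$), I would lift each measure-preserving $\phi:\Omega_1\to\Omega_2$ to the deterministic coupling $\pi_\phi(A\times B):=\mu_1(A\cap\phi^{-1}(B))$, which lies in $\Pi(\mu_1,\mu_2)$ and is concentrated on the graph of $\phi$. A Fubini-type computation then collapses the double integral against $\pi_\phi\otimes\pi_\phi$:
\begin{align*}
&\int |W_1(x,y)-W_2(x',y')|\, d\pi_\phi(x,x')\, d\pi_\phi(y,y')\\
&\quad =\int |W_1(x,y)-W_2(\phi(x),\phi(y))|\, d\mu_1(x)\, d\mu_1(y) = \|W_1-W_2^\phi\|_1.
\end{align*}
Taking the infimum over $\phi$ yields this inequality.

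For the reverse direction, the plan is to approximate an arbitrary $\pi\in\Pi(\mu_1,\mu_2)$ by deterministic couplings $\pi_{\phi_n}$, using the Birkhoff-type density result (valid in the atomless setting) that deterministic couplings are weakly dense in $\Pi(\mu_1,\mu_2)$. One then wishes to pass to the limit in the double integral. Since the integrand $|W_1(x,y)-W_2(x',y')|$ is only measurable, weak convergence of couplings alone is insufficient. I would handle this by first approximating $W_1$ and $W_2$ in $L^1$ by step functions via Theorem~\ref{thm:wrl}; for step functions the integrand is a finite linear combination of indicators of measurable rectangles, and the double integral depends continuously on the coupling under weak convergence. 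Taking limits in this reduced setting and then letting the step-function error vanish — which is controlled uniformly since $W_1,W_2$ take values in $[0,1]$ — transfers the identity back to the original graphons.

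The main obstacle is exactly this reverse direction: exchanging the infimum over couplings for one over measure-preserving maps despite the integrand being merely measurable on the four-fold product. The essential technical input is the density of deterministic couplings in $\Pi(\mu_1,\mu_2)$, combined with a double approximation (of both the coupling and the graphons) to justify the interchange of limits, after which the two formulations agree.
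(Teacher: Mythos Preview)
The paper does not supply its own proof of this statement; it is quoted as Remark~6.13 of Janson's survey and used as a black box. So there is no in-paper argument to compare your approach against, only the cited reference.

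Your easy direction is correct. The hard direction has a genuine gap. You invoke Theorem~\ref{thm:wrl} to reduce to step functions and then assert that ``the double integral depends continuously on the coupling under weak convergence.'' But the weak regularity lemma produces step functions on \emph{arbitrary measurable} partitions $\mathcal{P}_k$, and for such sets weak convergence $\pi_n\Rightarrow\pi$ does \emph{not} imply $\pi_n(\mathcal{P}_i\times\mathcal{Q}_k)\to\pi(\mathcal{P}_i\times\mathcal{Q}_k)$; Portmanteau only yields this for $\pi$-continuity sets. Since for step graphons the double integral equals $\sum_{i,j,k,l}|w_{ij}-v_{kl}|\,\pi(\mathcal{P}_i\times\mathcal{Q}_k)\,\pi(\mathcal{P}_j\times\mathcal{Q}_l)$, your limiting step fails as written. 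A repair is to work on $\Omega_1=\Omega_2=[0,1]$ and approximate $W_1,W_2$ in $L^1$ by step functions on \emph{interval} partitions (plain density of simple functions, not Theorem~\ref{thm:wrl}); the rectangle boundaries are then finite unions of axis-parallel segments, which are $\pi$-null because the marginals of every $\pi\in\Pi(\mu_1,\mu_2)$ are Lebesgue and hence atomless, so Portmanteau applies and your scheme goes through. Alternatively, the route Janson actually takes avoids approximation: given $\pi\in\Pi(\mu_1,\mu_2)$, view $(\Omega_1\times\Omega_2,\pi)$ as a probability space with coordinate projections $p_1,p_2$; these are measure-preserving and the coupling integral equals $\|W_1^{p_1}-W_2^{p_2}\|_{L^1(\pi\otimes\pi)}$ exactly, after which the standard-Borel isomorphism theorem and the single-map-versus-two-maps equivalence (Janson, Section~6) finish the argument without any density or weak-convergence step.
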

The characterization shown in Theorem~\ref{thm:eq} coincides with the 1-order Gromov-Wasserstein distance between $(\Omega_1, \mu_1)$ and $(\Omega_2, \mu_2)$~\cite{memoli2011gromov}. 
Let $W_{1,\mathcal{P}}$ and $W_{2,\mathcal{Q}}$ be two step functions defined
on $(\Omega_1, \mu_1)$ and $(\Omega_2, \mu_2)$, which have equitable partitions $\mathcal{P}=\{\mathcal{P}_i\}_{i=1}^{I}$ and $\mathcal{Q}=\{\mathcal{Q}_j\}_{j=1}^{J}$. 
We rewrite $W_1(x,y)-W_2(x', y')$ as $\sum_{i,j=1}^{I}w_{1,ij}1_{\mathcal{P}_i\times \mathcal{P}_j}(x,y)-\sum_{i',j'=1}^{J}w_{2,i'j'}1_{\mathcal{Q}_{i'}\times \mathcal{Q}_{j'}}(x', y')$ and denote $r_{iji'j'}$ as $|w_{1,ij}-w_{2,i'j'}|$. 
Let the probability measures $\mu_1$ and $\mu_2$ be constant in each partition, $i.e.$, $\mu_1(x)=\sum_i \mu_{1,i}1_{\mathcal{P}_i}(x)$ and $\mu_2(x)=\sum_i \mu_{2,j}1_{\mathcal{Q}_j}(x)$.
We can then rewrite the $\delta_1$ distance between the two step functions as
\begin{eqnarray*}\label{eq:gwd}
\begin{aligned}
&\delta_{1}(W_{1,\mathcal{P}}, W_{2,\mathcal{Q}})\\
=&\inf_{\pi\in \Pi(\mu_1, \mu_2)}\sum_{i,i',j,j'}\int_{\mathcal{P}_i\times\mathcal{P}_{j}\times\mathcal{Q}_{i'}\times\mathcal{Q}_{j'}}r_{iji'j'}d\pi(x,x')d\pi(y,y')\\
=&\inf_{\pi\in \Pi({\mu}_1, {\mu}_2)}\sum_{i,i',j,j'}r_{iji'j'}\int_{\mathcal{P}_i\times\mathcal{Q}_{i'}}d\pi(x,x')\int_{\mathcal{P}_{j}\times\mathcal{Q}_{j'}}d\pi(y,y')\\
=&\sideset{}{_{\bm{T}\in\Pi(\bm{\mu}_1,\bm{\mu}_2)}}\min\sideset{}{_{i,i',j,j'}}\sum r_{iji'j'}T_{ii'}T_{jj'}
=d_{\text{gw},1}(\bm{W}_1, \bm{W}_2),
\end{aligned}
\end{eqnarray*}
where $\bm{W}_1=[w_{1,ij}]\in [0, 1]^{I\times I}$ and $\bm{W}_2=[w_{2,i'j'}]\in [0, 1]^{J\times J}$ rewrite the step functions in matrix form.
Vectors $\bm{\mu}_1=[\mu_{1,i}]$ and $\bm{\mu}_2=[\mu_{2,j}]$ represents the probability measures $\mu_1$ and $\mu_2$;
$\bm{T}=[T_{ii'}]\in\mathbb{R}^{I\times J}$ is a doubly-stochastic matrix in the set $\Pi(\bm{\mu}_1, \bm{\mu}_2)=\{\bm{T}\geq \bm{0} | \bm{T}\bm{\mu}_{2}=\bm{\mu}_1, \bm{T}^{\top}\bm{\mu}_1=\bm{\mu}_2\}$, whose element $T_{ii'}=\int_{\mathcal{P}_i\times\mathcal{Q}_{i'}}d\pi(x,x')$. 
The optimal $\bm{T}$, denoted as $\bm{T}^*$, is called the optimal transport or optimal coupling between $\bm{\mu}_1$ and $\bm{\mu}_2$~\cite{villani2008optimal,peyre2019computational}.

The derivation above shows that instead of solving a complicated optimization problem in a function space, we can convert it to the 1-order Gromov-Wasserstein distance between matrices~\cite{peyre2016gromov,chowdhury2019gromov,xu2019gromov}. 
Moreover, when we replace the $r_{iji'j'}$ with $r_{iji'j'}^2$, we obtain the squared 2-order Gromov-Wasserstein distance:
\begin{eqnarray}\label{eq:gwd2}
\begin{aligned}
d_{\text{gw},2}^2(\bm{W}_1, \bm{W}_2)=&\sideset{}{}\min_{\bm{T}\in\Pi(\bm{\mu}_1,\bm{\mu}_2)}\sideset{}{_{i,i',j,j'}}\sum r_{iji'j'}^2 T_{ii'}T_{jj'}\\
=&\sideset{}{_{\bm{T}\in\Pi(\bm{\mu}_1,\bm{\mu}_2)}}\min \langle \bm{D} - 2\bm{W}_1\bm{T}\bm{W}_2^{\top}, \bm{T} \rangle.
\end{aligned}
\end{eqnarray}
Here, $\langle \cdot, \cdot\rangle$ calculates the inner product of two matrices. $\bm{D}=(\bm{W}_1\odot \bm{W}_1)\bm{\mu}_1\bm{1}_{J}^{\top} + \bm{1}_{I}\bm{\mu}_2^{\top}(\bm{W}_2\odot \bm{W}_2)$, where $\bm{1}_{I}$ is an $I$-dimensional all-one vector and $\odot$ represents the Hadamard product of matrix. 
Because the 2-order GW distance and the 1-order GW distance are equivalent (pseudo) metrics (Theorem 5.1 in~\cite{memoli2011gromov}), the $d_{\text{gw},2}^2(\bm{W}_1, \bm{W}_2)$ also provides a good alternative for the cut distance of the step functions.
Plugging (\ref{eq:gwd2}) into (\ref{eq:opt1}), the learning problem becomes estimating a GW barycenter of the observed graphs~\cite{peyre2016gromov}:
\begin{eqnarray}\label{eq:opt2}
\begin{aligned}
\sideset{}{_{\bm{W}\in [0, 1]^{K\times K}}}\min \frac{1}{M}\sideset{}{_{m=1}^{M}}\sum d^2_{\text{gw},2}(\bm{A}_m, \bm{W}),
\end{aligned}
\end{eqnarray}
where $\bm{A}_m$ is the adjacency matrix of the graph $\mathcal{G}_m$ and $\bm{W}=[w_{kk'}]\in [0, 1]^{K\times K}$ is the matrix representation of step function $W_{\mathcal{P}}$.
Note that the number of partitions $K$ and the probability measures associated with $\bm{W}$ and $\{\bm{A}_m\}_{m=1}^{N}$ are predefined. 
In the following subsection, we detail how to solve (\ref{eq:opt2}).

\subsection{Implementation details}
\textbf{Setting the number of partitions} 
Given a set of graphs $\{\mathcal{G}_m\}_{m}^{M}$, we denote $N_{\max}$ as the number of the nodes of the largest graph. 
Following the work in~\cite{chan2014consistent,airoldi2013stochastic,channarond2012classification}, we can set the number of partitions to be $K = \lfloor \frac{N_{\max}}{\log N_{\max}} \rfloor$. 
This setting has been proven helpful to achieve a trade-off between accuracy and computational efficiency.

\textbf{Estimating probability measures}
For the observed graphs, we estimate the probability measures by normalized node degrees~\cite{xu2019scalable}. 
We assume that the probability measure of $\bm{W}$ is sorted, $i.e.$, $\bm{\mu}_W=[\mu_{W,1},...,\mu_{W,K}]$ and $\mu_{W,1}\geq ...\geq \mu_{W,K}$, which is estimated by sorting and merging $\{\bm{\mu}_m\}_{m=1}^{M}$. 
Here, $\bm{\mu}_m = \frac{1}{\|\bm{A}_m\bm{1}_{N_m}\|_1}\bm{A}_m\bm{1}_{N_m}$ for $m=1,...,M$, and 
\begin{eqnarray}\label{eq:mu}
\begin{aligned}
\bm{\mu}_{W}=\frac{1}{M}\sideset{}{_{m=1}^{M}}\sum \text{interp1d}_{K}(\text{sort}(\bm{\mu}_m)),
\end{aligned}
\end{eqnarray}
where $\text{sort}(\cdot)$ sorts the elements of the input vector in descending order, and $\text{interp1d}_{K}(\cdot)$ samples $K$ values from the input vector via linear interpolation. 
This strategy has proven beneficial for calculating the GW distance between graphs, which provides useful information when calculating the optimal transport between each $\bm{A}_m$ and the $\bm{W}$~\cite{xu2019scalable}. 

\textbf{Learning optimal transports}
The computation of the $d_{\text{gw},2}^2(\bm{A}_m,\bm{W})$ is a non-convex, non-smooth optimization problem. 
To solve this problem efficiently, we apply the proximal gradient algorithm developed in~\cite{xu2019gromov}. 
This algorithm reformulates the original problem as a series of subproblems and solves them iteratively. 
In each iteration, the subproblem is
\begin{eqnarray}\label{eq:prox}
\min_{\bm{T}\in\Pi(\bm{\mu}_m,\bm{\mu}_W)} \langle \bm{D}_m - 2\bm{A}_m\bm{T}^{(s)}\bm{W}^{\top}, \bm{T} \rangle + \beta \text{KL}(\bm{T} \lVert \bm{T}^{(s)}),
\end{eqnarray}
where $\bm{T}^{(s)}$ is the previous estimation of $\bm{T}$, $\bm{D}_m=(\bm{A}_m\odot \bm{A}_m)\bm{\mu}_m\bm{1}_{K} + \bm{1}_{N_m}\bm{\mu}_W^{\top}(\bm{W}\odot \bm{W})$.
We fix one transport matrix as its previous estimation in the GW term and add a proximal term as the regularizer. 
Here, the proximal term penalizes the KL-divergence between the transport matrix and its previous estimation, which smooths the update of the transport matrix. 
This problem can be solved by the Sinkhorn scaling algorithm~\cite{sinkhorn1967concerning}, whose convergence rate is linear~\cite{altschuler2017near,xie2020fast}.

\textbf{Learning GW barycenters}
Given the optimal transports $\{\bm{T}_m\}_{m=1}^{M}$, the GW barycenter has a closed-form solution~\cite{peyre2016gromov}:
\begin{eqnarray}\label{eq:average}
\bm{W} = \frac{1}{\bm{\mu}_W\bm{\mu}_{W}^{T}}\sideset{}{_{m=1}^{M}}\sum \bm{T}_m^{\top}\bm{A}_m\bm{T}_m.
\end{eqnarray}
The scheme of our algorithm is shown in Algorithm~\ref{alg1}. 
\begin{algorithm}[t]
\small{
	\caption{Learning Graphons via GWB}
	\label{alg1}
	\begin{algorithmic}[1]
	    \STATE \textbf{Input:} Adjacency matrices $\{\bm{A}_m\}_{m=1}^{M}$. The weight of proximal term $\beta$, the number of iterations $L$, the number of inner Sinkhorn iterations $S$.
	    \STATE Initialize $K = \lfloor \frac{N_{\max}}{\log N_{\max}} \rfloor$, and $\bm{W}\sim \text{Uniform}([0,1])$.
	    \STATE Initialize $\{\bm{\mu}_m\}_{m=1}^{M}$ and $\bm{\mu}_W$ via (\ref{eq:mu})
	    \STATE \textbf{For} $l=1,...,L$:
	    \STATE \quad\textbf{For} $m=1,...,M$:\hfill \texttt{// Solve (\ref{eq:prox})}
	    \STATE \quad\quad Initialize $\bm{T}^{(0)}=\bm{\mu}_m\bm{\mu}_W^{\top}$ and $\bm{a}=\bm{\mu}_m$.
	    \STATE \quad\quad\textbf{For} $s=0,...,S-1$:
	    \STATE \quad\quad\quad $\bm{C}=\exp(-\frac{1}{\beta}(\bm{D}_m - 2\bm{A}_m\bm{T}^{(s)}\bm{W}^{\top}))\odot \bm{T}^{(s)}$
	    \STATE \quad\quad\quad $\bm{b}=\frac{\bm{\mu}_{W}}{\bm{C}^{\top}\bm{a}}$, $\bm{a} = \frac{\bm{\mu}_m}{\bm{C}\bm{b}}$, $\bm{T}^{(s+1)}=\text{diag}(\bm{a}){\bm{C}}\text{diag}(\bm{b})$.
	    \STATE \quad\quad $\bm{T}_m=\bm{T}^{(S)}$.
	    \STATE \quad Update $\bm{W}$ via (\ref{eq:average}).
	    \STATE The graphon $W_{\bm{P}}(x,y)=\sum_{k,k'}w_{kk'}1_{\bm{P}_k\times \bm{P}_{k'}}(x,y)$.
	\end{algorithmic}
}
\end{algorithm}

\section{Structured Gromov-Wasserstein Barycenters}
We extend the above algorithm and propose two kinds of structured Gromov-Wasserstein barycenters to apply our learning method to more complicated scenarios.

\textbf{Smoothed GW Barycenters}
As shown in Figure~\ref{fig:example1b}, the estimated graphons achieved by our method can be discontinuous because of the permutation invariance of Gromov-Wasserstein distance. 
To suppress the discontinuity of the results, we impose a smoothness regularization on the GW barycenters and obtain the following problem: 
\begin{eqnarray}\label{eq:sgwb}
\begin{aligned}
\sideset{}{_{\bm{W}\in [0, 1]^{K\times K}}}\min &\frac{1}{M}\sideset{}{_{m=1}^{M}}\sum \langle \bm{D}_m - 2\bm{A}_m\bm{T}_m\bm{W}^{\top}, \bm{T}_m \rangle\\
&+\alpha \|\bm{L}\bm{W}\bm{L}^{\top}\|_F^2,
\end{aligned}
\end{eqnarray}
where $\bm{T}_m$ is current estimation of the $m$-th optimal transport and $\bm{L}\bm{W}\bm{L}^{\top}$ is the matrix representation of the Laplacian filtering of $\bm{W}$.
The first-order optimality condition of this problem also has a closed-form solution. 
In particular, setting the gradient of the objection to zero, we obtain
\begin{eqnarray}\label{eq:1st}
\begin{aligned}
&2\alpha \bm{L}^{\top}\bm{L}\bm{W}\bm{L}^{\top}\bm{L} + \text{diag}(\bm{\mu}_W)\bm{W}\text{diag}(\bm{\mu}_W)\\
=&\frac{1}{M}\sideset{}{_{m=1}^{M}}\sum\bm{T}_m^{\top}\bm{A}_m\bm{T}_m.
\end{aligned}
\end{eqnarray}
Applying singular value decomposition (SVD) to $\bm{L}^{\top}\bm{L}$, $i.e.$, $\bm{L}^{\top}\bm{L}=\bm{U}\bm{\Lambda}\bm{U}^{\top}$, we rewrite the left side of (\ref{eq:1st}) as $\bm{H}\bm{W}\bm{H}^{H}$, where $\bm{H}=\bm{U}(\sqrt{2\alpha}\bm{\Lambda}+i\text{diag}(\bm{\mu}_W))\bm{U}^{\top}$ is a symmetric complex matrix and $\bm{H}^{H}=\bm{U}(\sqrt{2\alpha}\bm{\Lambda}-i\text{diag}(\bm{\mu}_W))\bm{U}^{\top}$ is its Hermitian transpose. 
Therefore, we obtain a smoothed $\bm{W}$ by replacing line 11 of Algorithm~\ref{alg1} with
$\bm{W} = \frac{1}{M}\bm{H}^{-1}\left(\sideset{}{_{m=1}^{M}}\sum\bm{T}_m^{\top}\bm{A}_m\bm{T}_m\right)\bm{H}^{-H}$.

\textbf{Mixed GW Barycenters}
When the observed graphs are generated by $C$ different graphons, we can build a graphon mixture model and learn it as mixed GW barycenters:
\begin{eqnarray}\label{eq:mgwb}
\begin{aligned}
\min_{\{\bm{W}_c\}_{c=1}^{C},\bm{P}\in\Pi(\frac{1}{C}\bm{1}_C,\frac{1}{M}\bm{1}_M)} \sideset{}{}\sum_{c=1}^{C}\sideset{}{}\sum_{m=1}^{M} p_{cm} d^2_{\text{gw},2}(\bm{A}_m, \bm{W}_c).
\end{aligned}
\end{eqnarray}
where we set $\bm{P}=[p_{cm}]$ as a doubly stochastic matrix, whose marginals are $\frac{1}{C}\bm{1}_C$ and $\frac{1}{M}\bm{1}_M$. 
The value $p_{cm}$ indicates the joint probability that the $m$-th graph is generated by the $c$-th graphon. 
In other words, the objective function of (\ref{eq:mgwb}) leads to a hierarchical optimal transport problem~\cite{luo2020hierarchical}, in which the ground distance is defined by the GW distance and $\bm{P}$ is the optimal transport matrix.
This problem is solved by alternating optimization.
In particular, replacing $\frac{1}{M}$ with $p_{cm}$, we still apply Algorithm~\ref{alg1} to learn $\{\bm{W}_c\}_{c=1}^{C}$.
Given $\{\bm{W}_c\}_{c=1}^{C}$, we calculate the ground distance matrix $\bm{D}_{\text{gw}}=[d^2_{\text{gw},2}(\bm{A}_m, \bm{W}_c)]$ and update $\bm{P}$ by solving an optimal transport problem with an entropy regularization.
\begin{eqnarray}
\begin{aligned}
\sideset{}{_{\bm{P}\in\Pi(\frac{1}{C}\bm{1}_C,\frac{1}{M}\bm{1}_M)}}\min \langle\bm{D}_{\text{gw}},\bm{P}\rangle + \beta \langle\bm{P},\log\bm{P}\rangle.
\end{aligned}
\end{eqnarray}
Similar to (\ref{eq:prox}), this problem can also be solved by the Sinkhorn scaling algorithm in~\cite{sinkhorn1967concerning}.

\section{Related Work}
\textbf{Graphon estimation} 
As a classic graphon estimation method, the stochastic block approximation (SBA) learns stochastic block models as graphons~\cite{airoldi2013stochastic}.
The block size of the method can be optimized heuristically by the ``largest gap'' algorithm~\cite{channarond2012classification}. 
The smoothing-and-sorting (SAS) method improves this strategy by adding total-variation denoising as a post-processing step~\cite{chan2014consistent}. 
The work in~\cite{pensky2019dynamic} further extends this strategy and proposes a dynamic stochastic block model to describe time-varying graphons. 
The matrix completion (MC) method~\cite{keshavan2010matrix}, the universal singular value thresholding (USVT) algorithm~\cite{chatterjee2015matrix}, and the spectral method~\cite{xu2018rates} learn low-rank matrices as the proposed step functions. 
The work in~\cite{ruiz2020graphon} represents graphons by their Fourier transformations.
The methods above require that the observed graphs be well-aligned and generated by a single graphon. 
Our work makes the first attempt to learn one or multiple graphons from unaligned graphs.

\begin{table*}[t]
\centering
\begin{small}
    \caption{Comparisons on estimation errors (MSE for ``Easy to align'', $d_{\text{gw},2}$ for ``Hard to align'')}\label{tab:error}
    \vspace{-8pt}
    \begin{tabular}{
    @{\hspace{1pt}}c@{\hspace{2pt}}|
    l@{\hspace{2pt}}|
    @{\hspace{2pt}}c@{\hspace{2pt}}|
    @{\hspace{2pt}}c@{\hspace{2pt}}
    @{\hspace{2pt}}c@{\hspace{2pt}}
    @{\hspace{2pt}}c@{\hspace{2pt}}
    @{\hspace{2pt}}c@{\hspace{2pt}}
    @{\hspace{2pt}}c@{\hspace{2pt}}|
    @{\hspace{2pt}}c@{\hspace{2pt}}
    @{\hspace{2pt}}c@{\hspace{1pt}}}
    \hline\hline
    Type 
    &$W(x,y)$, $x,y\in [0,1]$ 
    &\# nodes
    &SBA &LG &MC &USVT &SAS &GWB &SGWB\\
    \hline
    &\multirow{2}{*}{$xy$} &$200$
    &65.6$\pm$6.5 
    &29.8$\pm$5.7 
    &\textbf{11.3$\pm$0.8} 
    &31.7$\pm$2.5 
    &125.0$\pm$1.3 
    &40.6$\pm$5.7 
    &39.3$\pm$5.5\\
    &&100$\sim$300
    &157.4$\pm$23.3 
    &133.2$\pm$22.8 
    &138.3$\pm$21.3 
    &131.2$\pm$24.2 
    &161.9$\pm$19.5 
    &52.5$\pm$13.1 
    &\textbf{51.9$\pm$12.6}\\
    \cline{2-10}
    &\multirow{2}{*}{$e^{-(x^{0.7}+y^{0.7})}$} &$200$
    &58.7$\pm$7.8 
    &22.9$\pm$3.1 
    &71.7$\pm$0.5 
    &\textbf{12.2$\pm$1.5} 
    &77.7$\pm$0.8 
    &21.6$\pm$2.1 
    &20.9$\pm$1.8\\
    &&100$\sim$300
    &165.2$\pm$22.8 
    &157.6$\pm$24.2 
    &166.2$\pm$21.5 
    &158.2$\pm$24.5 
    &153.4$\pm$25.1 
    &48.6$\pm$11.9 
    &\textbf{48.0$\pm$10.8}\\
    \cline{2-10}
    &\multirow{2}{*}{$\frac{x^2+y^2+\sqrt{x}+\sqrt{y}}{4}$} &$200$
    &63.4$\pm$7.6 
    &24.1$\pm$2.5 
    &73.2$\pm$0.7 
    &33.8$\pm$1.1 
    &99.3$\pm$1.2 
    &18.9$\pm$3.5 
    &\textbf{18.4$\pm$2.8}\\
    &&100$\sim$300
    &258.5$\pm$36.0 
    &254.2$\pm$36.6 
    &259.5$\pm$35.0 
    &254.2$\pm$36.8 
    &240.6$\pm$39.2 
    &81.0$\pm$18.8 
    &\textbf{80.5$\pm$17.8}\\
    \cline{2-10}
    Easy&\multirow{2}{*}{$\frac{1}{2}(x+y)$} &$200$
    &66.2$\pm$8.3 
    &24.0$\pm$2.5 
    &71.9$\pm$0.6 
    &40.2$\pm$0.8 
    &108.3$\pm$1.0 
    &21.2$\pm$4.6 
    &\textbf{20.2$\pm$3.9}\\
    &&100$\sim$300
    &247.6$\pm$40.3 
    &241.3$\pm$41.0 
    &247.0$\pm$39.1 
    &241.3$\pm$41.0 
    &231.3$\pm$40.2 
    &\textbf{83.8$\pm$22.5}
    &84.6$\pm$22.0\\
    \cline{2-10}
    to&\multirow{2}{*}{$\frac{1}{1+\exp(-10(x^2+y^2))}$} &$200$
    &55.0$\pm$9.5 
    &23.1$\pm$3.2 
    &64.6$\pm$0.5 
    &37.3$\pm$0.6 
    &73.3$\pm$0.7 
    &\textbf{14.8$\pm$2.3} 
    &16.1$\pm$1.5\\
    &&100$\sim$300
    &394.0$\pm$45.7 
    &397.0$\pm$46.5 
    &400.7$\pm$45.6 
    &399.3$\pm$47.0 
    &345.4$\pm$52.6 
    &62.8$\pm$12.6 
    &\textbf{62.5$\pm$12.3}\\
    \cline{2-10}
    Align&\multirow{2}{*}{$\frac{1}{1+\exp(-(\max\{x,y\}^2}$} &$200$
    &48.3$\pm$6.1 
    &24.5$\pm$2.3 
    &71.1$\pm$0.4 
    &24.4$\pm$0.5 
    &54.4$\pm$0.5 
    &\textbf{15.3$\pm$1.0} 
    &17.1$\pm$1.4\\
    &&100$\sim$300
    &382.9$\pm$54.7 
    &387.9$\pm$53.6 
    &392.5$\pm$52.3 
    &391.9$\pm$54.8 
    &336.7$\pm$58.6 
    &\textbf{39.8$\pm$8.6} 
    &41.7$\pm$8.3\\
    \cline{2-10}
    (MSE)&\multirow{2}{*}{$e^{-\max\{x,y\}^{3/4}}$} &$200$
    &56.3$\pm$7.1 
    &26.8$\pm$0.9 
    &79.3$\pm$0.5 
    &50.6$\pm$0.3 
    &68.6$\pm$0.6 
    &21.4$\pm$1.7 
    &\textbf{21.2$\pm$1.0}\\
    &&100$\sim$300
    &234.7$\pm$32.9 
    &234.0$\pm$33.3 
    &241.4$\pm$31.2 
    &242.9$\pm$33.3 
    &212.0$\pm$36.5 
    &49.3$\pm$9.5 
    &\textbf{48.7$\pm$9.1}\\
    \cline{2-10}
    &\multirow{2}{*}{$e^{-\frac{\min\{x,y\}+\sqrt{x}+\sqrt{y}}{2}}$} &$200$ 
    &55.7$\pm$7.7 
    &26.4$\pm$5.6 
    &76.4$\pm$0.4 
    &28.3$\pm$0.5 
    &76.4$\pm$0.8 
    &\textbf{23.2$\pm$1.3}
    &23.3$\pm$1.4\\
    &&100$\sim$300
    &232.1$\pm$30.8 
    &231.4$\pm$31.8 
    &238.7$\pm$29.9 
    &232.6$\pm$31.9 
    &208.3$\pm$34.8 
    &48.2$\pm$11.7 
    &\textbf{47.9$\pm$11.0}\\
    \cline{2-10}
    &\multirow{2}{*}{$\log(1+\max\{x,y\})$}  &$200$
    &66.0$\pm$8.4 
    &37.1$\pm$6.6 
    &66.9$\pm$0.8 
    &120.9$\pm$0.5 
    &137.0$\pm$1.2 
    &23.7$\pm$2.5 
    &\textbf{23.2$\pm$1.8}\\
    &&100$\sim$300
    &370.8$\pm$38.9 
    &370.7$\pm$40.4 
    &374.5$\pm$39.5 
    &375.5$\pm$37.5 
    &337.7$\pm$42.1 
    &\textbf{104.0$\pm$18.8} 
    &107.4$\pm$18.9\\
    \hline\hline
    \multirow{2}{*}{Hard}&\multirow{2}{*}{$|x-y|$} &$200$
    &0.202$\pm$0.001 
    &0.200$\pm$0.002 
    &0.206$\pm$0.001 
    &0.215$\pm$0.002 
    &0.217$\pm$0.002 
    &0.057$\pm$0.005 
    &\textbf{0.050$\pm$0.003}\\
    &&100$\sim$300
    &0.254$\pm$0.007 
    &0.254$\pm$0.008 
    &0.254$\pm$0.009 
    &0.261$\pm$0.009 
    &0.248$\pm$0.008 
    &0.085$\pm$0.012 
    &\textbf{0.080$\pm$0.010}\\
    \cline{2-10}
    \multirow{2}{*}{to}&\multirow{2}{*}{$1 - |x-y|$} &$200$
    &0.200$\pm$0.001 
    &0.198$\pm$0.002 
    &0.202$\pm$0.002 
    &0.209$\pm$0.003 
    &0.217$\pm$0.001 
    &0.063$\pm$0.003 
    &\textbf{0.057$\pm$0.001}\\
    &&100$\sim$300
    &0.383$\pm$0.041 
    &0.384$\pm$0.040 
    &0.383$\pm$0.040 
    &0.393$\pm$0.041 
    &0.350$\pm$0.044 
    &\textbf{0.075$\pm$0.009} 
    &0.077$\pm$0.004\\
    \cline{2-10}
    \multirow{2}{*}{Align}&\multirow{2}{*}{$0.8\bm{I}_2\otimes 1_{[0, \frac{1}{2}]^2}$} &$200$
    &0.252$\pm$0.018 
    &0.258$\pm$0.018 
    &0.258$\pm$0.016 
    &0.252$\pm$0.016 
    &0.367$\pm$0.004 
    &0.252$\pm$0.002 
    &\textbf{0.218$\pm$0.001}\\
    &&100$\sim$300
    &0.355$\pm$0.005 
    &0.359$\pm$0.002 
    &0.361$\pm$0.005 
    &0.392$\pm$0.038 
    &0.409$\pm$0.004 
    &\textbf{0.328$\pm$0.034} 
    &0.329$\pm$0.032\\
    \cline{2-10}
    \multirow{2}{*}{($d_{\text{gw},2}$)}&\multirow{2}{*}{$0.8\text{flip}(\bm{I}_2)\otimes 1_{[0, \frac{1}{2}]^2}$}  &$200$
    &0.241$\pm$0.010 
    &0.254$\pm$0.005 
    &0.250$\pm$0.003 
    &0.242$\pm$0.003 
    &0.364$\pm$0.001 
    &0.252$\pm$0.002 
    &\textbf{0.190$\pm$0.058}\\
    &&100$\sim$300
    &0.453$\pm$0.004 
    &0.487$\pm$0.002 
    &0.450$\pm$0.008 
    &0.477$\pm$0.001 
    &0.468$\pm$0.001 
    &0.427$\pm$0.027 
    &\textbf{0.420$\pm$0.027}\\
    \hline\hline
    \end{tabular}
\end{small}
\end{table*}

\textbf{Gromov-Wasserstein distance}
The GW distance has been widely used to measure the difference between structured data, $e.g.$, geometry shapes~\cite{memoli2011gromov} and graphs~\cite{vayer2018fused}. 
For graphs, the optimal transport associated with their GW distance indicates the correspondence between their nodes, which is beneficial for graph matching~\cite{xu2019gromov}. 
To calculate this distance, the work in~\cite{peyre2016gromov} adds an entropy regularizer to the objective function and applies the Sinkhorn scaling algorithm~\cite{cuturi2013sinkhorn}. 
The work in~\cite{xu2019gromov} improves this method by replacing the entropy regularizer with a Bregman proximal term. 
An ADMM-based method is proposed in~\cite{xu2020gwf} to calculate the GW distance between directed graphs. 
Recently, the recursive GW distance~\cite{xu2019scalable} and the sliced GW distance~\cite{titouan2019sliced} have been proposed to reduce the computational complexity of the GW distance. 
A GW barycenter model is proposed in~\cite{peyre2016gromov}, which shows the potential of graph clustering~\cite{xu2020gwf}.
Our work is pioneering in the development of structured GW barycenters to learn graphons.

\section{Experiments}
\subsection{Synthetic data}
To demonstrate the efficacy of our \textbf{GWB} method and its smoothed variant (\textbf{SGWB}), we compare them with existing methods on learning synthetic graphons. 
We set the hyperparameters of our methods as follows: the weight of the proximal term $\beta=0.005$, the number of iterations $L=5$, and the number of Sinkhorn iterations $S=10$; for the SGWB method, the weight of the smoothness regularizer $\alpha=0.0002$. 
The baselines include the stochastic block approximation (\textbf{SBA})~\cite{airoldi2013stochastic}, the ``largest gap'' (\textbf{LG}) based block approximation~\cite{channarond2012classification}, the matrix completion (\textbf{MC})~\cite{keshavan2010matrix}, the universal singular value thresholding (\textbf{USVT})~\cite{chatterjee2015matrix}, and the sorting-and-smoothing (\textbf{SAS})~\cite{chan2014consistent}. 

We prepare 13 kinds of synthetic graphons, whose definitions are shown in Table~\ref{tab:error}. 
The resolution of these graphons is $1000\times 1000$.
Among these graphons, nine are considered in~\cite{chan2014consistent}. 
The graphs generated by them are easy to align -- the node degrees of these graphs provide strong evidence to sort and match nodes. 
For these graphons, we apply the mean-square-error (MSE) to evaluate different methods.
Additionally, to highlight the advantage of our method, we design four challenging graphons, whose graphs are hard to align.\footnote{$0.8\bm{I}_2\otimes 1_{[0, \frac{1}{2}]^2}$ is a graphon with two diagonal blocks, and $0.8\text{flip}(\bm{I}_2)\otimes 1_{[0, \frac{1}{2}]^2}$ is a bipartite graphon, where $\otimes$ represents the Kronecker product and $\bm{I}_2$ is a $2\times 2$ identity matrix.} 
In particular, for the graphs generated by these four graphons, the node degrees of different nodes can be equal to each other.
Therefore, there is no simple way of aligning the graphs.
For these graphons, we apply the GW distance $d_{\text{gw},2}$ as the evaluation measurement. 
For each graphon, we simulate graphs with two different settings: 
In one setting, each of the graphs has $200$ nodes, while in the other setting, the number of each graph's nodes is sampled uniformly from the range $[100, 300]$. 
Originally, the baselines above are designed for the former setting. 
When dealing with graphs with different sizes, they pad zeros to the corresponding adjacency matrices and enforce the graphs to have the same size.
For each setting, we test our method and the baselines in $10$ trials, and  
in each trial we simulate $10$ graphs and estimate the graphon by different methods.

\begin{table*}[t]
\centering
\begin{small}
    \caption{Comparisons on averaged MSE and runtime (second) under different configurations}\label{tab:consistent}
    \vspace{-8pt}
    \begin{tabular}{
    @{\hspace{4pt}}c@{\hspace{4pt}}|
    @{\hspace{4pt}}c@{\hspace{4pt}}|
    @{\hspace{4pt}}c@{\hspace{4pt}}|
    @{\hspace{4pt}}c@{\hspace{4pt}}
    @{\hspace{4pt}}c@{\hspace{4pt}}
    @{\hspace{4pt}}c@{\hspace{4pt}}
    @{\hspace{4pt}}c@{\hspace{4pt}}
    @{\hspace{4pt}}c@{\hspace{4pt}}
    @{\hspace{4pt}}c@{\hspace{4pt}}
    @{\hspace{4pt}}c@{\hspace{4pt}}
    }
    \hline\hline
    Measurement &\# graphs $M$ & \# nodes $N$
    &SBA
    &LG
    &MC
    &USVT
    &SAS
    &GWB
    &SGWB\\
    \hline
    \multirow{6}{*}{MSE}
    &2 &200 
    &62.1$\pm$9.0 &26.5$\pm$3.9 &73.6$\pm$2.3 &45.6$\pm$1.6 &94.1$\pm$1.5 &\textbf{24.6$\pm$4.3} &\textbf{24.3$\pm$3.3}\\
    &10 &200 
    &59.5$\pm$7.7 &26.5$\pm$3.6 &70.7$\pm$0.6 &39.9$\pm$0.9 &91.1$\pm$0.9 &\textbf{22.3$\pm$2.8} &\textbf{21.9$\pm$2.4}\\
    &20 &200 
    &32.2$\pm$2.0 &23.7$\pm$4.1 &50.7$\pm$0.4 &38.7$\pm$0.5 &91.0$\pm$0.7 &\textbf{21.1$\pm$2.0} &\textbf{20.7$\pm$1.7}\\
    &2 &500 
    &59.4$\pm$0.4 &49.1$\pm$2.0 &69.0$\pm$1.8 &35.4$\pm$2.4 &23.9$\pm$4.3 &\textbf{22.2$\pm$7.0} &\textbf{21.1$\pm$4.5}\\
    &10 &500 
    &47.1$\pm$5.6 &20.2$\pm$1.4 &60.6$\pm$0.3 &27.6$\pm$0.4 &34.8$\pm$0.8 &\textbf{19.7$\pm$2.5} &\textbf{19.6$\pm$1.7}\\
    &20 &500 
    &34.1$\pm$3.3 &18.8$\pm$1.7 &43.2$\pm$0.2 &27.0$\pm$0.3 &35.4$\pm$0.7 &\textbf{17.3$\pm$1.9} &\textbf{17.7$\pm$1.4}\\
    \hline
    \multirow{2}{*}{Runtime}
    &10 &200
    &0.69$\pm$0.03 &0.61$\pm$0.04 &0.02$\pm$0.01 &0.02$\pm$0.00 &0.03$\pm$0.01 &0.32$\pm$0.04 &0.34$\pm$0.03\\
    &10 &500
    &3.74$\pm$0.09 &3.69$\pm$0.08 &0.08$\pm$0.02 &0.09$\pm$0.01 &0.08$\pm$0.01 &0.67$\pm$0.06 &0.70$\pm$0.06\\
    \hline\hline
    \end{tabular}
\end{small}
\end{table*}

The experimental results in Table~\ref{tab:error} show that our GWB method outperforms the baselines in most situations. 
Especially for the graphs that are hard to align, the gap between our methods and the baselines become bigger.
When the observed graphs have different sizes, the estimation errors of the baselines increase because the padding step does harm to the alignment of the graphs.
By contrast, our GWB method is relatively robust to the variance of graph size, and  
it achieves much lower estimation errors. 
Additionally, with the help of the smoothness regularizer, our SGWB method improves the stability of the original GWB method, which achieves comparable estimator errors but with smaller standard deviations.
Typical visualization results are shown in Figure~\ref{fig:example1}.

Moreover, our methods are robust across datasets.
In particular, for the nine graphons that are easy to align, we generate different numbers of graphs with different sizes. 
Under each configuration, we record the averaged MSEs of the learning methods in Table~\ref{tab:consistent}, which verifies the robustness of our method. 

Besides estimation errors, we also compare various methods on their computational complexity.
Suppose that we have $M$ graphs, and each of them has $N$ nodes and $E$ edges. 
Learning a step function with $K$ partitions as a graphon, we list the complexity of the methods in Table~\ref{tab:complexity}.
In particular, line 8 of Algorithm~\ref{alg1} involves sparse matrix multiplication, whose complexity is $\mathcal{O}(EK + NK^2)$. 
Because the graphs often have sparse edges, $i.e.$, $E=\mathcal{O}(N\log N)$ and $K=\mathcal{O}(\frac{N}{\log N})$, the complexity of our GWB method is comparable to others when the numbers of iterations ($i.e.$, $L$ and $S$) are small. 
The runtime in Table~\ref{tab:consistent} shows that our GWB and SGWB are generally faster than SBA and LG in practice. 

\begin{table}[t]
\centering
\begin{small}
    \caption{Comparisons on computational complexity}\label{tab:complexity}
    \vspace{-8pt}
    \begin{tabular}{
    @{\hspace{4pt}}c@{\hspace{4pt}}
    @{\hspace{4pt}}l@{\hspace{4pt}}|
    @{\hspace{4pt}}c@{\hspace{4pt}}
    @{\hspace{4pt}}l@{\hspace{4pt}}}
    \hline\hline
    Method & Complexity & Method & Complexity \\
    \hline
    LG   &$\mathcal{O}(MN^2)$ &SBA  &$\mathcal{O}(MKN\log N)$              \\
    MC   &$\mathcal{O}(N^3)$  &SAS  &$\mathcal{O}(MN\log N + K^2\log K^2)$ \\
    USVT &$\mathcal{O}(N^3)$  &GWB  &$\mathcal{O}(LSM(EK + NK^2))$         \\
    \hline\hline
    \end{tabular}
\end{small}
\end{table}

\subsection{Real-world data}
For real-world graph datasets, our mixed GWB method (\textbf{MixGWBs}) provides a new way to cluster graphs.
In particular, by learning $C$ graphons from $M$ graphs, we achieve $C$ centroids of different clusters and the learned optimal transport $\bm{P}=[p_{cm}]$ indicates the probability that the $m$-th graph belongs to the $c$-th cluster.
To demonstrate the effectiveness of our method, we test it on two datasets and compare it with three baselines. 
The datasets are the IMDB-BINARY and the IMDB-MULTI~\cite{yanardag2015deep}, which can be downloaded from~\cite{morris2020tudataset}.
The IMDB-BINARY contains 1000 graphs belonging to two clusters, while 
the IMDB-MULTI contains 1500 graphs belonging to three clusters.
These two datasets are challenging for graph clustering, as the nodes and edges of their graphs do not have any side information. 
We have to cluster graphs merely based on their binary adjacency matrices. 

For these two datasets, the clustering methods based on the GW distance achieve state-of-the-art performance. 
The representative methods include ($i$) the fused Gromov-Wasserstein kernel method (\textbf{FGWK})~\cite{titouan2019optimal}; ($ii$) the K-means using the Gromov-Wasserstein distance as the metric (\textbf{GW-KM}); and ($iii$) the Gromov-Wasserstein factorization (\textbf{GWF}) method~\cite{xu2020gwf}.
We test our MixGWBs method and compare it with these three methods on clustering accuracy. 
In particular, for each dataset, we apply 10-fold cross-validation to evaluate each clustering method.
The averaged clustering accuracy and the standard deviation are shown in Table~\ref{tab:cmp}. 
The performance of our method is at least comparable to that of the competitors. 
Figure~\ref{fig:imdb} visualizes the graphons learned by our method and illustrates the difference between different clusters.
We can find that the graphons correspond to the block models with different block sizes. 
Additionally, the IMDB-MULTI is much more challenging than the IMDB-BINARY, because it contains three rather than two clusters and the block structure of each cluster is not so clear as the clusters of the IMDB-BINARY. 

\begin{table}[t]
	\centering
	\caption{Comparisons on clustering accuracy (\%)}\label{tab:cmp}
	\vspace{-8pt}
	\begin{small}
	\begin{tabular}{
	@{\hspace{4pt}}l@{\hspace{4pt}}|
	@{\hspace{4pt}}c@{\hspace{4pt}}
	@{\hspace{4pt}}c@{\hspace{4pt}}
	@{\hspace{4pt}}c@{\hspace{4pt}}
	@{\hspace{4pt}}c@{\hspace{4pt}}}
	\hline\hline
	    Dataset 
	    &FGWK
	    &GW-KM 
	    &GWF
	    &MixGWBs\\
	    \hline
	    IMDB-BINARY
	    &56.7$\pm$1.5
	    &53.5$\pm$2.3
	    &60.6$\pm$1.7
	    &\textbf{61.4$\pm$1.8}\\
	    IMDB-MULTI
	    &42.3$\pm$2.4
	    &41.0$\pm$3.1
	    &40.8$\pm$2.0
	    &\textbf{42.9$\pm$1.9}\\
	\hline\hline
	\end{tabular}
    \end{small}
\end{table}


\begin{figure}[t!]
    \centering
    \subfigure[IMDB-BINARY]{
    \includegraphics[height=1.5cm]{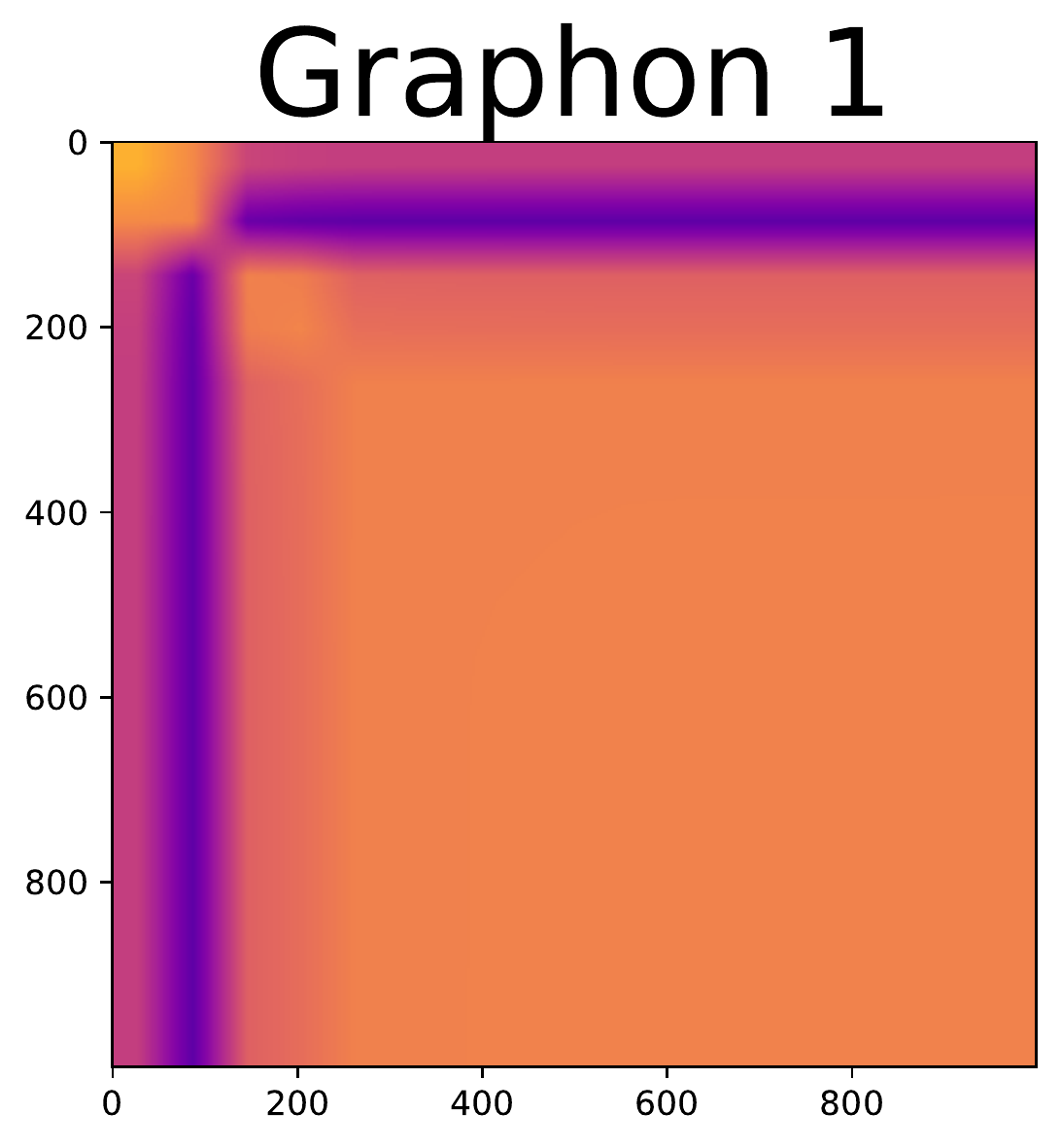}
    \includegraphics[height=1.5cm]{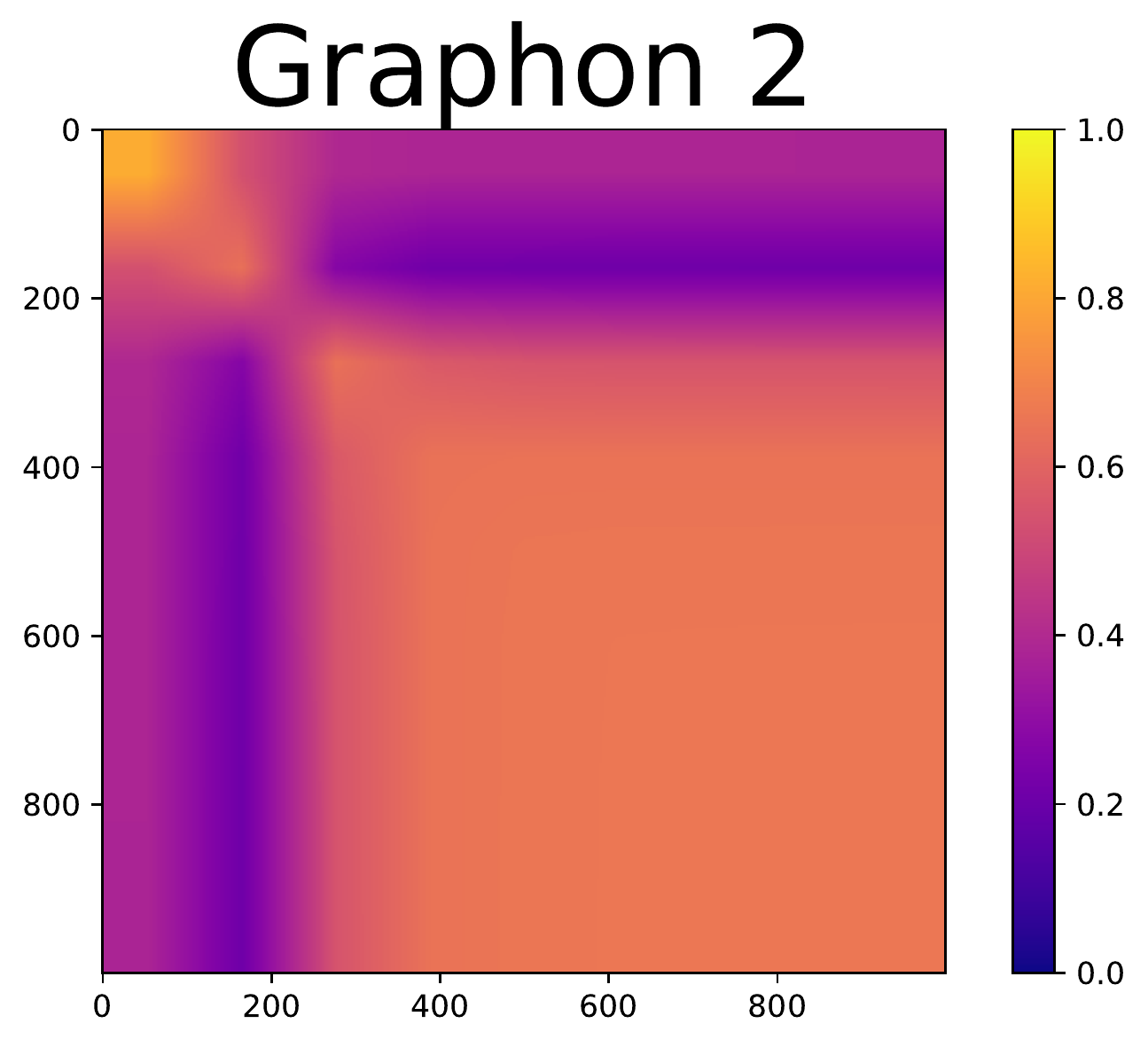}\label{fig:imdb1}
    }
    \subfigure[IMDB-MULTI]{
    \includegraphics[height=1.5cm]{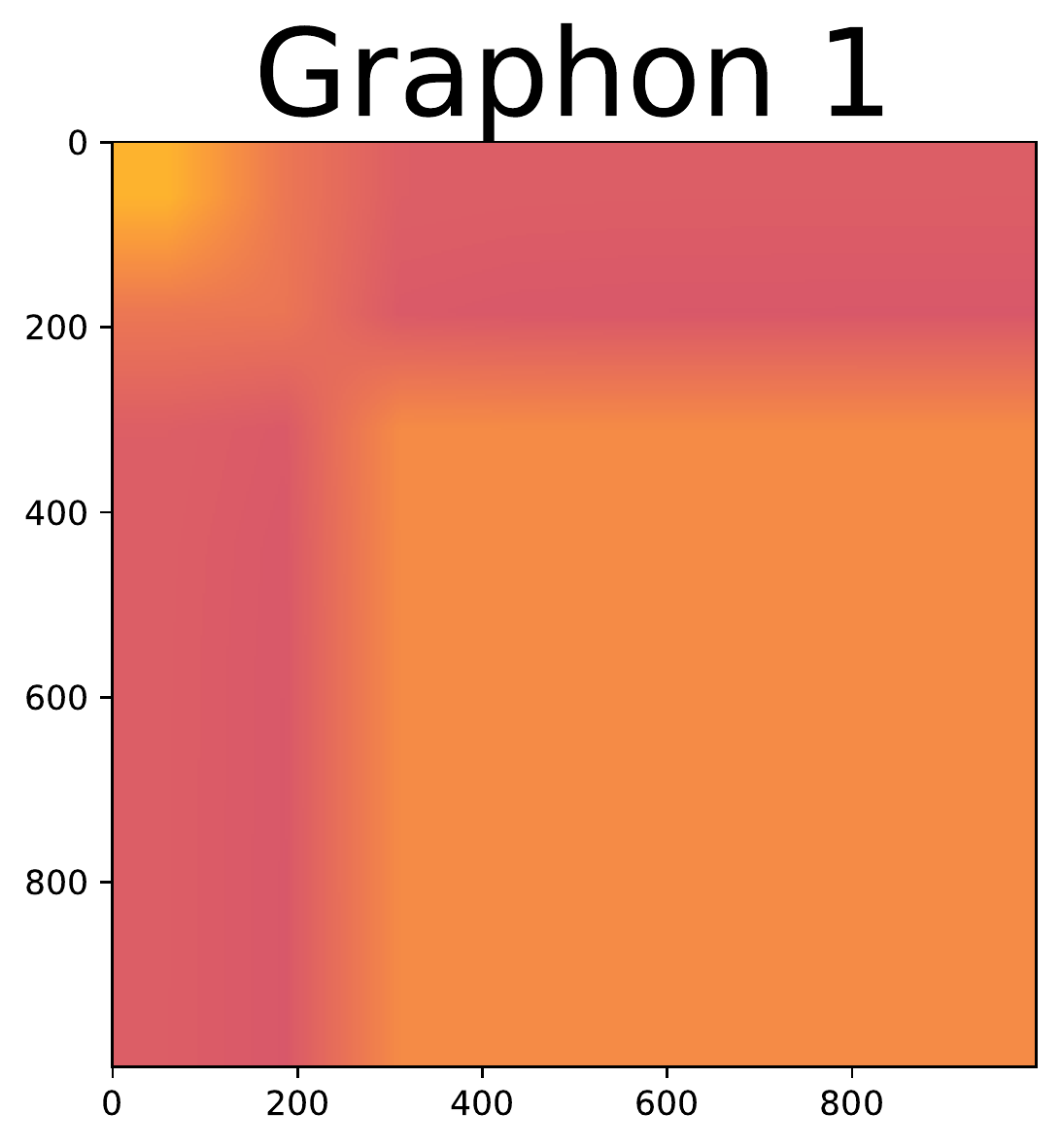}
    \includegraphics[height=1.5cm]{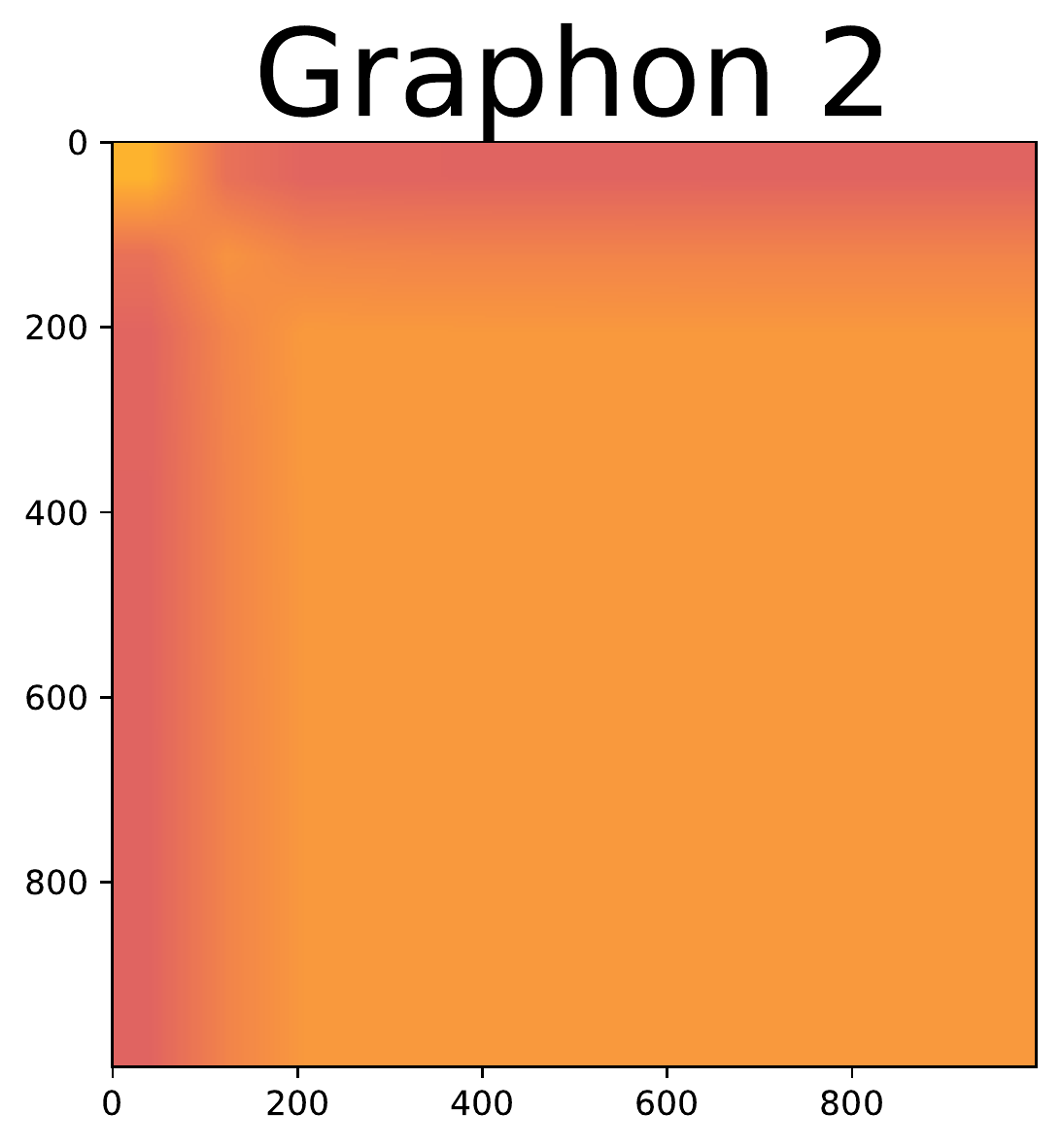}
    \includegraphics[height=1.5cm]{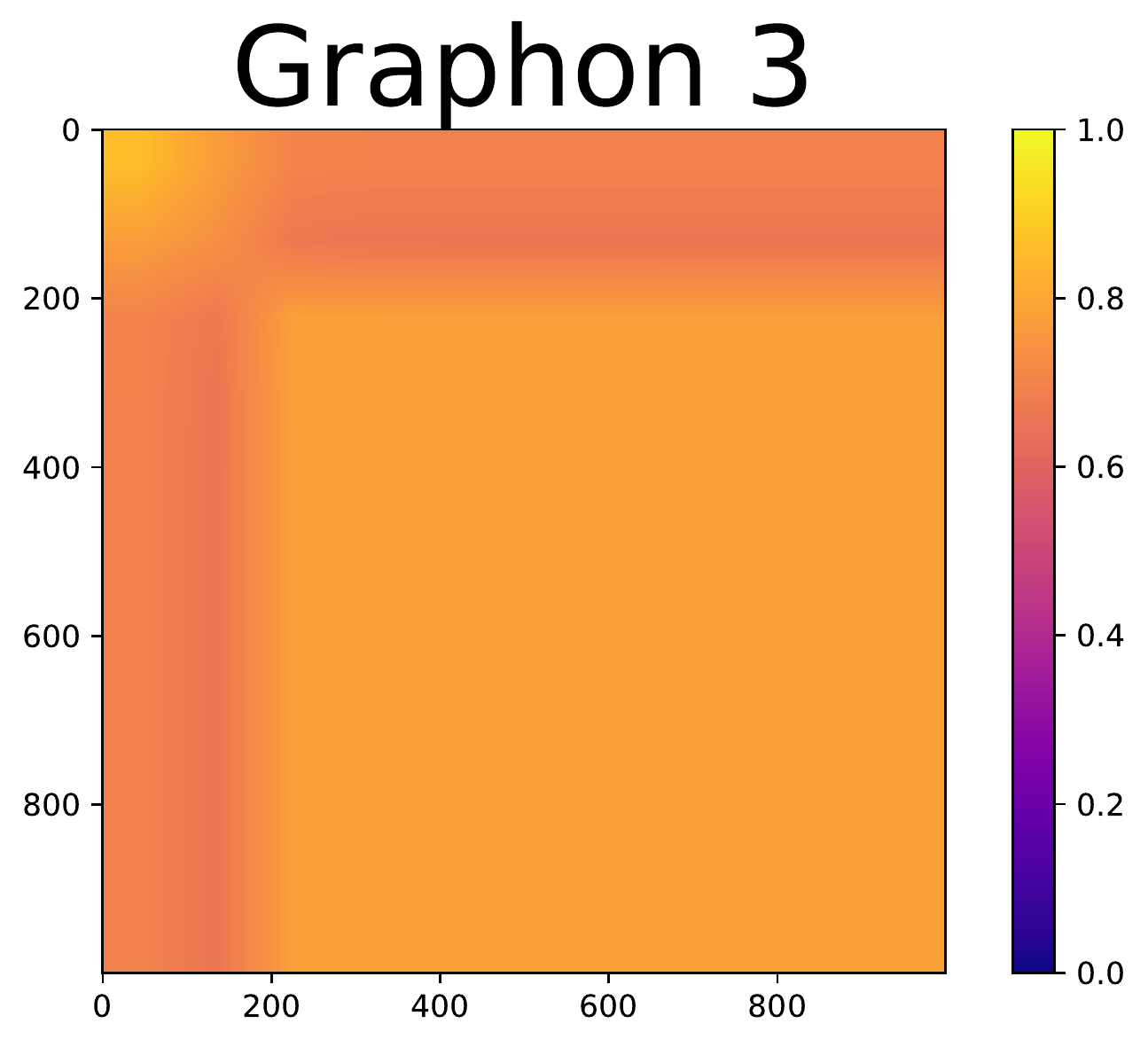}\label{fig:imdb2}
    }
    \vspace{-8pt}
    \caption{\small{Illustrations of estimated graphons.}}
    \label{fig:imdb}
\end{figure}

\section{Conclusions}
In this paper, we propose a novel method to learn graphons from unaligned graphs.
Our method minimizes an upper bound of the cut distance between the target graphons and their approximations, which leads to a GW barycenter problem. 
To extend our method to practical scenarios, we developed two structured variants of the basic GWB algorithm. 
In the future, we plan to improve the robustness of our method to the weight of the smoothness regularizer and further reduce the complexity of our method by applying the recursive GW distance~\cite{xu2019scalable} or the sliced GW distance~\cite{titouan2019sliced} to accelerate the computation of optimal transports. 
Additionally, because the graphon is naturally a generative graph model, we will consider using the model to achieve graph generation tasks.

\section{Acknowledgment} 
The Duke University component of this work was supported in part by DARPA, DOE, NIH, ONR and NSF, and a portion of the work performed by the first two authors was performed when they were affiliated with Duke. 
Hongteng Xu was supported in part by Beijing Outstanding Young Scientist Program (NO. BJJWZYJH012019100020098) and National Natural Science Foundation of China (No. 61832017).
Dixin Luo was supported in part by the Beijing Institute of Technology Research Fund Program for Young Scholars (XSQD-202107001) and the project 2020YFF0305200. 
We thank Thomas Needham and Samir Chowdhury for their constructive suggestions.
\bibstyle{aaai21}
\bibliography{graphon_gwb}

\end{document}